\documentclass{article}

 \usepackage[preprint]{neurips_2026}

\usepackage[utf8]{inputenc} % allow utf-8 input
\usepackage[T1]{fontenc}    % use 8-bit T1 fonts
\usepackage{hyperref}       % hyperlinks
\usepackage{url}            % simple URL typesetting
\usepackage{booktabs}       % professional-quality tables
\usepackage{amsfonts}       % blackboard math symbols
\usepackage{nicefrac}       % compact symbols for 1/2, etc.
\usepackage{microtype}      % microtypography
\usepackage{xcolor}         % colors
\usepackage{amsmath}
\usepackage{amssymb}
\usepackage{amsthm}
\usepackage{longtable}
\usepackage{graphicx}
\usepackage{algorithm}
\usepackage{algpseudocode}
\usepackage{multirow}
\usepackage{colortbl}
\usepackage{subcaption}

\newtheorem{theorem}{Theorem}
\newtheorem{corollary}{Corollary}
\newtheorem{proposition}{Proposition}
\newtheorem{lemma}[theorem]{Lemma}

\title{Cluster Frequency Conformal Prediction for Local Coverage}

\author{%
  Tomer Lavi \\
  Institute of Applied AI Research (AAIR)\\
  Faculty of Computer and Information Science\\
  Ben-Gurion University of the Negev, Beer-Sheva 84105, ISRAEL\\
  \texttt{tomerlav@post.bgu.ac.il} \\
  \And
  Bracha Shapira \\
  Institute of Applied AI Research (AAIR)\\
  Faculty of Computer and Information Science\\
  Ben-Gurion University of the Negev, Beer-Sheva 84105, ISRAEL\\
  \texttt{bshapira@bgu.ac.il} \\
  \And
  Nadav Rappoport \\
  Institute for Interdisciplinary Computational Science (ICS)\\
  Faculty of Computer and Information Science\\
  Ben-Gurion University of the Negev, Beer-Sheva 84105, ISRAEL\\
  \texttt{nadavrap@bgu.ac.il} \\
}

\begin{document}

\maketitle

\begin{abstract}
Conformal prediction provides distribution-free coverage guarantees, but in many-class classification it may still under-cover specific classes or subpopulations, preventing safe deployment in high-stakes applications. We propose Cluster Frequency Conformal Prediction (CFCP), a plug-in framework that adapts conformal prediction to local structure in a learned representation space. CFCP clusters learned embeddings, estimates cluster-level label-frequency distributions from calibration data, and for each test point constructs a sample-specific probability vector by softly mixing nearby cluster distributions regularized with global-prior and reliability-aware shrinkage. This vector is then conformalized using standard set constructors. In the disjoint-split regime, CFCP inherits standard finite-sample marginal validity. Under additional assumptions, CFCP further admits a local-validity interpretation. Since representation clusters aggregate locally similar samples, their empirical class frequencies provide a stable estimate of local label ambiguity. Across image and text benchmarks, CFCP achieves the best class coverage in 15/16 dataset/score-family comparisons and a competitive prediction set size efficiency, with several settings substantially more efficient. Overall, our results show that cluster-frequency information provides an effective localized signal for improving classwise reliability in many-class conformal prediction.
\end{abstract}

\section{Introduction}

Conformal prediction (CP) provides finite-sample marginal coverage guarantees for set-valued predictions under exchangeability, a principled and widely adopted tool for uncertainty quantification in machine learning \citep{vovk2005algorithmic, angelopoulos2021gentle}. Marginal validity alone, however, can mask substantial disparities: a method may attain the target overall coverage while still under-covering specific classes or subpopulations. This issue is especially pronounced in many-class settings, where calibration data are sparse per class and the conditional label distribution may vary substantially across regions of the representation space.

In these settings, a single global prediction rule may be too coarse. Classwise or Mondrian conformal methods calibrate separately per label, but in many-class problems they can become unstable or overly conservative because each class receives only a limited number of calibration examples \citep{vovk2012conditional, ding2023manyclasses}. 
A complementary direction localizes conformal prediction in feature space: neighborhood-based methods \citep{ghosh2023ncp}, subgroup mixture models \citep{zhang2024pcp}, and covariate-aware reweighting schemes \citep{barber2023local, gil2024identifying}. All show that representation-aware adaptation can improve uncertainty quantification beyond globally calibrated procedures. Yet, to the best of our knowledge, no existing method directly uses local cluster-level label frequency distributions as the primary probabilistic signal for conformal set construction - the precise gap that this work addresses.

We build on the observation that learned representations often organize data into subpopulations with different label compositions (see Figure \ref{fig:Clusters.Horses}). Learned embeddings preserve the covariate geometry induced by the trained network and can capture within-class and cross-class subpopulations with distinct empirical label frequencies. A test point may lie in a region whose local ambiguity differs markedly from the global average, even when its base-model confidence appears similar to that of other points. This suggests that conformal prediction should adapt not only to a model's output scores, but also to the local label structure of the point's neighborhood in representation space. Such structure may reflect not only class-level ambiguity, but also finer within-class variation. This suggests a concrete design principle: rather than relying on a single global probability vector, a conformal method should estimate the local label distribution of a test point's neighborhood and use it directly as the input to set construction.

\begin{figure}[!ht]
    \centering
    \includegraphics[width=1\linewidth]{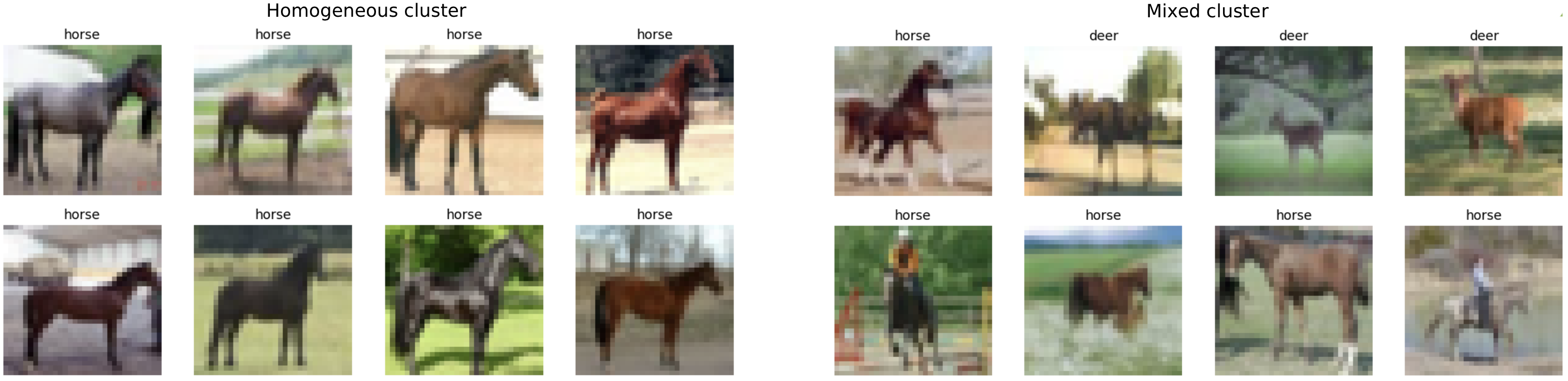}
    \caption{Visual intuition for our proposed Cluster Frequency Conformal Prediction (CFCP) framework. Example clusters induced by learned image representations. Left: eight samples from a highly homogeneous cluster whose sampled members are all labeled horse. Right: eight samples of a label-heterogeneous cluster. Such clusters illustrate the CFCP intuition that nearby samples in representation space may exhibit different local label distributions, which CFCP uses to form sample-specific probability vectors before standard conformal set construction.}
    \label{fig:Clusters.Horses}
\end{figure}

We therefore propose Cluster Frequency Conformal Prediction (CFCP), a plug-in conformal framework for classification. CFCP first clusters learned embeddings, then estimates cluster-level label frequency distributions from calibration data. For a new point, it retrieves nearby clusters, forms a soft mixture of their label distributions and regularizes that mixture with global priors and reliability-aware shrinkage. The resulting sample-specific probability vector is then passed to a standard conformal set constructor. Thus, CFCP preserves the standard conformal thresholding machinery while replacing the global probability vector with one that is locally adapted to representation-space structure. Interpretability is a natural byproduct: each prediction set is grounded in the label distribution of structurally similar training examples, making the uncertainty transparent and auditable.

This design also yields a clean validity interpretation. In the disjoint-split regime, where the local cluster-frequency model is estimated on one subset and the conformal threshold is calibrated on another, CFCP inherits the usual finite-sample marginal validity guarantee of split conformal prediction. Under additional assumptions, randomized Adaptive Prediction Sets (APS) \citep{romano2020classification} further admits a sharper local-validity interpretation. Our primary goal is practical: to improve classwise reliability in the many-class settings where purely global methods systematically hide coverage disparities, a regime where exact conditional validity guarantees are generally unattainable.

Empirically, CFCP is designed for the regime where classwise reliability is hardest: many classes, heterogeneous representation geometry and limited calibration data per class. Across image and text benchmarks, CFCP improves the fraction of classes that satisfy the target coverage level in 15/16 within score-family comparisons against strong baselines, while maintaining competitive prediction-set sizes and, in several settings, substantially improving efficiency. This suggests that local cluster-frequency information can improve the tradeoff between classwise coverage and practical prediction-set size.

Our contributions are:
\begin{itemize}
    \item We introduce CFCP, a plug-in conformal framework that constructs a sample-specific label distribution from cluster-level label frequencies in a learned representation space. CFCP provides both a computational and a statistical advantage over point-neighborhood methods in the many-class, high-dimensional regime.
    \item We show that CFCP inherits standard finite-sample marginal validity in the disjoint-split regime and we provide an oracle local-validity interpretation for randomized APS.
    \item We empirically show on challenging many-class image and text benchmarks that CFCP often improves classwise coverage while remaining competitive in set size.
\end{itemize}

\subsection{Background and Related Work}

Let \(\{(X_i,Y_i)\}_{i=1}^{m}\) be exchangeable, let a base predictor be trained on a proper training split and let a nonconformity score \(s:\mathcal{X}\times\mathcal{Y}\to\mathbb{R}\) quantify how atypical a candidate label is for a feature vector. In split conformal prediction (Split), calibration scores are computed on a held-out set \(s_i=s(X_i,Y_i)\) for \(i\in|m|\). Sort them as \(s_{(1)}\le\cdots\le s_{(m)}\) and set
\(
\hat q_{1-\alpha}\;=\;s_{(k)},\qquad k\;=\;\left\lceil (m+1)(1-\alpha)\right\rceil.
\)
For a new \(x\), the conformal prediction set is
\(
\hat{c}(x)\;=\;\{\,y\in\mathcal{Y}: s(x,y)\le \hat q_{1-\alpha}\,\}.
\)
Then, the marginal coverage is \(\mathbb{P}\{Y\in\hat{c}(X)\}\ge 1-\alpha\).

Least Ambiguous Classifier (LAC) applies Split to the softmax score $1-\hat p_x(y)$. It returns the set of labels whose predicted probability is at least the calibrated threshold $1-\hat q_{1-\alpha}$. Adaptive set constructions such as APS, regularized APS (RAPS) and Sorted Adaptive Prediction Sets (SAPS) improve efficiency by using cumulative probability scores and rank-dependent regularization (\cite{romano2020classification, huang2023conformal}). These methods are strong baselines, but they primarily target marginal validity and set size, not disparities in coverage across classes or subpopulations.

To address this, class-conditional and many-class conformal methods calibrate more finely across labels. Mondrian conformal prediction and Inductive Conformal Prediction (ICP) target label-conditional behavior \citep{vovk2012conditional}, while \cite{ding2023manyclasses} stabilize many-class classwise calibration by clustering labels with similar conformity-score distributions (CCP). \cite{ding2025conformal} further study classwise reliability in long-tailed classification by reweighting predicted probabilities of under-represented classes. \cite{shi2024rc3p} follows CCP while improving efficiency by classwise label ranking (RC3P).

A complementary direction localizes conformal prediction in feature space. Neighborhood CP (NCP) uses nearby examples in a learned embedding space \citep{ghosh2023ncp}. Posterior CP (PCP) models conformity scores through latent subgroup mixtures \citep{zhang2024pcp}, and recent work on local weighting and subgroup discovery further highlights the value of covariate-aware uncertainty quantification \citep{barber2023local, gil2024identifying}.

CFCP is most closely related to NCP and the local conformal literature, but differs fundamentally in both mechanism and robustness. NCP reweights the conformal quantile using per-point k-NN distances in the raw embedding space. This approach (i) scales as $O(n_{cal}\cdot d)$ per query, (ii) degrades in high dimensions as pairwise Euclidean distances concentrate and weights become nearly uniform, and (iii) provides no fallback when the neighborhood is uninformative. CFCP instead fits spherical k-means on $\ell2$-normalized embeddings, compresses the calibration set into $K$ centroids, and constructs a sample-specific label-frequency distribution, not a reweighted quantile, and reliability-aware shrinkage toward a global prior. This design reduces inference from $O(n_{cal}\cdot d)$ to $O(K\cdot d)$, exploits the angular geometry that supervised training imposes on the embedding space and gracefully degrades to standard conformal prediction when cluster assignments are diffuse.

\section{Method}
\label{sec:method}

\begin{figure}[!ht]
    \centering
    \includegraphics[width=1\linewidth]{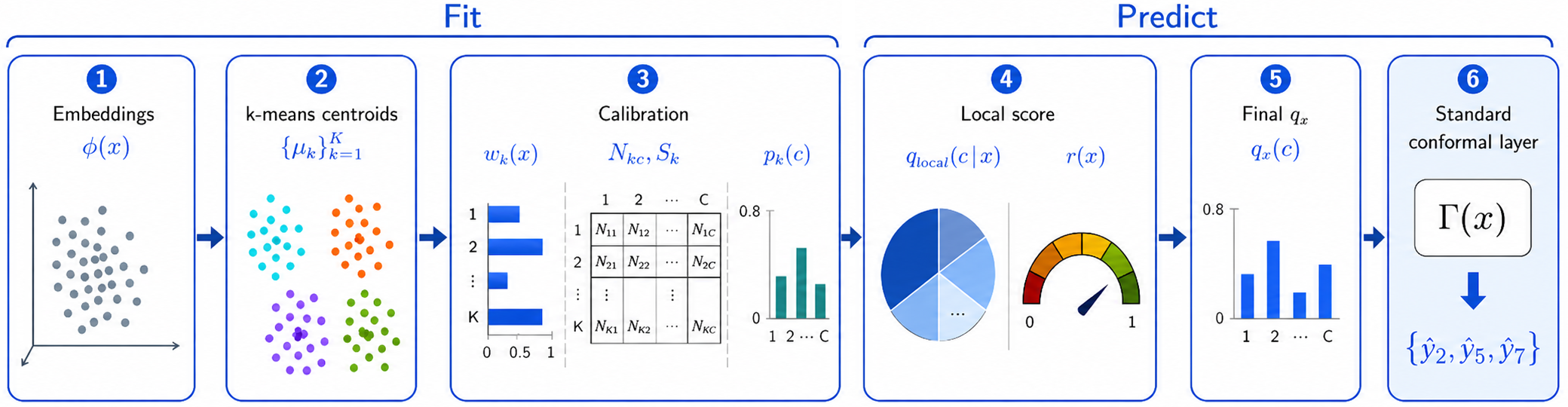}
\caption{
CFCP pipeline. Learned representations are clustered (1, 2) and calibration labels are accumulated as soft cluster counts  (3). Smoothed cluster distributions are mixed according to the test point's soft assignment weights and reliability-shrunk to form probabilities (4), which are passed to a standard conformal set constructor (5+6). See full notation list in Table \ref{tab:cfcp-symbols} in the appendix.
}
\label{fig:cfcp_pipeline}
\end{figure}

CFCP has one core design choice: replace the global class-probability vector used by standard conformal set constructors with a locally adapted vector $q_x$, estimated from cluster-level label frequencies in representation space. This section formalizes that construction.

The method first clusters normalized learned representations and uses soft nearest-centroid assignments to estimate cluster-level label-frequency statistics from the calibration subset (Figure~\ref{fig:cfcp_pipeline}). These statistics are smoothed toward a global prior to obtain cluster distributions, which are then mixed according to the test point's soft assignment weights. To avoid diffuse assignments or weakly supported clusters, CFCP applies reliability-aware shrinkage toward a fallback prior, producing the final sample-specific vector $q_x$. This vector is then passed to a standard conformal set constructor such as LAC, APS, RAPS, or SAPS.

\subsection{Local cluster-frequency model}

Let $f$ be a pre-trained classifier over $C$ classes, and let $\phi(x)\in\mathbb{R}^d$ be a fixed learned representation of input $x$. We $\ell_2$-normalize all embeddings and fit spherical $k$-means on the training representations, obtaining centroids $\mu_1,\dots,\mu_K\in\mathbb{R}^d$. For a point $x$ with normalized embedding $z$, we identify its top-$m$ nearest centroids by cosine similarity and convert their similarities $\delta_1(x),\dots,\delta_m(x)$ into soft assignment weights where $\tau>0$ controls assignment sharpness. Lower values yield more local assignments.
\begin{equation}
w_j(x)=\frac{\exp(\delta_j(x)/\tau)}{\sum_{\ell=1}^m \exp(\delta_\ell(x)/\tau)}, \qquad j=1,\dots,m,
\end{equation}
Let the calibration set be split into $D_{\mathrm{stat}}$ and $D_q$, where $D_{\mathrm{stat}}$ is used to estimate local label distributions and $D_q$ is reserved for conformal quantile calibration. Rather than assigning each calibration point to a single cluster, CFCP distributes it fractionally across its $m$ nearest centroids. Formally, the soft cluster-label counts are:
\begin{equation}
N_{kc}=\sum_{(x_i,y_i)\in D_{\mathrm{stat}}}\sum_{j=1}^m w_{ij}\,\mathbf{1}\{k_j(x_i)=k,\; y_i=c\},
\end{equation}
together with effective cluster supports:
\begin{equation}
S_k=\sum_{(x_i,y_i)\in D_{\mathrm{stat}}}\sum_{j=1}^m w_{ij}\,\mathbf{1}\{k_j(x_i)=k\}.
\end{equation}
Thus, each calibration point contributes fractionally to several nearby clusters rather than to a single hard partition.

For each cluster $k$, we estimate a label distribution by shrinking the soft counts toward a global prior $\pi\in\Delta^{C-1}$:
\begin{equation}
\hat p_k(c)=\frac{N_{kc}+\beta \pi_c}{S_k+\beta}, \qquad c=1,\dots,C,
\end{equation}
where $\beta>0$ is a smoothing parameter. For a test point $x$, this yields the local cluster-frequency mixture
\begin{equation}
q_x^{\mathrm{local}}(c)=\sum_{j=1}^m w_j(x)\hat p_{k_j(x)}(c), \qquad c=1,\dots,C.
\end{equation}
Because this local mixture may be unreliable when assignments are diffuse or nearby clusters have low support, we further shrink it toward a fallback prior $\pi_x$, taken as the base model's per-sample class-probability vector. We define a reliability weight:
$\mathrm{conf}(x)=\Big(\max_{1\le j\le m} w_j(x)\Big)^\gamma, \mathrm{sup}(x)=\sum_{j=1}^m w_j(x) S_{k_j(x)},\, \mathrm{supw}(x)=\frac{\mathrm{sup}(x)}{\mathrm{sup}(x)+\beta_{\mathrm{sup}}},\, r(x)=\mathrm{conf}(x)\cdot\mathrm{supw}(x)\in[0,1],$ 
where $\gamma>0$ controls the effect of assignment concentration and $\beta_{\mathrm{sup}}>0$ regularizes the neighborhood support term. $r(x)$ increases when the nearest centroid is dominant and those centroids are well‑supported, otherwise we back off to the base model. The final CFCP probability vector is
\begin{equation}
q_x(c)=r(x)\,q_x^{\mathrm{local}}(c) + (1-r(x))\,\pi_x(c), \qquad c=1,\dots,C.
\end{equation}

\subsection{Conformal prediction}

CFCP separates local probability estimation from set construction. After computing $q_x$, we apply a standard split-conformal set constructor such as LAC, APS, RAPS or SAPS. Let $D_q=\{(x_i,y_i)\}_{i=1}^{n_q}$ denote the calibration subset reserved for quantile estimation, and let $s_i=s(x_i,y_i)$ be the induced nonconformity scores under the chosen constructor. The split-conformal threshold is
\(
\hat q_{1-\alpha}=S_{(k)}, k=\left\lceil (n_q+1)(1-\alpha)\right\rceil,
\)
where $S_{(k)}$ is the $k$-th order statistic.

For the Split score, the resulting prediction set is
\(
\hat\Gamma(x)=\{c\in\{1,\dots,C\}: q_x(c)\ge 1-\hat q_{1-\alpha}\}.
\)\\
For APS, RAPS and SAPS, labels are sorted by $q_x(c)$ and included according to the corresponding cumulative or rank score rule. Hence, CFCP is a plug-in conformal framework: the only modification relative to standard conformal prediction is the construction of the probability vector $q_x$ from local cluster-frequency information. Full score definitions for each constructor are given in Appendix \ref{app:method}.

In the disjoint-split regime, where $D_{\mathrm{stat}}$ is used only to estimate the local model and $D_q$ only for threshold calibration, CFCP inherits the standard finite-sample marginal validity guarantee. The formal theorem and proof are deferred to Appendix \ref{app:validity}.

\section{Theoretical Analysis}
\label{sec:theoretical-analysis}
CFCP is a plug-in conformal framework: it first estimates a local probability vector $q_x$ from calibration-derived cluster statistics, and then applies a standard split-conformal threshold on a held-out calibration subset. We therefore distinguish between two levels of validity. First, in the disjoint-split regime $D_{\mathrm{stat}} \cap D_q = \emptyset,$ CFCP inherits the standard finite-sample marginal validity guarantee of split conformal prediction. Second, under additional assumptions, randomized APS admits an exact local validity statement. The oracle intuition follows the randomized oracle construction of \cite{romano2020classification}, while the set-valued oracle perspective is also consistent with \citet{sadinle2019least}. More broadly, this discussion fits the literature on conditional and local conformal validity, where exact local guarantees are known to require stronger structure or randomization
\citep{vovk2012conditional,barber2023local}.

Appendix \ref{app:validity} provides the formal proofs for the following theorems and corollaries, the conditional-uniformity lemma underlying the oracle APS result and an approximate local-validity proposition for the general CFCP framework. 

\begin{theorem}[Finite-sample marginal validity of disjoint-split CFCP]
\label{thm:cfcp-marginal}
Assume the disjoint-split CFCP construction above: the local CFCP model is fitted without using labels from $D_q$, the calibration examples in $D_q$ and the test point are exchangeable conditional on the fitted local model $\widehat M$, and any auxiliary randomization used by APS/RAPS/SAPS is independent. Let $\widehat{\Gamma}$ be the resulting CFCP prediction set obtained with the split-conformal threshold calibrated on $D_q$. Then
\[
\Pr\{Y_{n+1}\in \widehat{\Gamma}(X_{n+1})\mid \widehat M\}\ge 1-\alpha .
\]
\end{theorem}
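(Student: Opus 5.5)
Conditional on $\widehat M$, the map $x\mapsto q_x$ is a fixed deterministic function: the centroids, the soft weights $w_j$, the counts $N_{kc}$ and supports $S_k$, the smoothed $\hat p_k$, the reliability weight $r(x)$, and the fallback $\pi_x$ all depend only on the training representations and on $D_{\mathrm{stat}}$, never on labels in $D_q$. The plan is to reduce the claim to the classical split-conformal quantile lemma applied to a fixed score function.

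\textbf{Step 1: the score is fixed.} Write the score as $s_{\widehat M}(x,y,u)$, where $u$ is the auxiliary uniform variable used by randomized APS/RAPS/SAPS. It is $1-q_x(y)$ for LAC, and the cumulative (possibly regularized or rank-based) rule for the others. Given $\widehat M$, this is a fixed measurable function of $(x,y,u)$.

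\textbf{Step 2: the scores are exchangeable.} Attach i.i.d.\ $U_i$, independent of the data and of $\widehat M$, to each point of $D_q$ and to the test point. By hypothesis, $(X_i,Y_i)_{i\in D_q}$ together with $(X_{n+1},Y_{n+1})$ are exchangeable conditional on $\widehat M$. Hence the augmented triples are exchangeable, and so are the scores $S_1,\dots,S_{n_q},S_{n+1}$ obtained by applying one fixed function to each triple.

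\textbf{Step 3: apply the quantile lemma.} For exchangeable $S_1,\dots,S_{n_q+1}$ and $k=\lceil (n_q+1)(1-\alpha)\rceil$, the event $\{S_{n_q+1}\le S_{(k)}\}$ is the event that the rank of $S_{n_q+1}$ among all $n_q+1$ scores is at most $k$. Ties only help. With $S_{(k)}$ taken over the $n_q$ calibration scores, this event has conditional probability at least $k/(n_q+1)\ge 1-\alpha$. When $k>n_q$, set $\hat q=+\infty$, so coverage holds trivially.

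\textbf{Step 4: identify the event with coverage.} By construction of each constructor, $Y_{n+1}\in\widehat\Gamma(X_{n+1})$ if and only if $s_{\widehat M}(X_{n+1},Y_{n+1},U_{n+1})\le \hat q_{1-\alpha}$. For LAC this is immediate from $q_x(c)\ge 1-\hat q$. For APS, RAPS and SAPS, $\widehat\Gamma$ must be defined as the set of labels whose score, with the same $U$, is at most $\hat q$; this is the definition in Appendix~\ref{app:method}. Taking conditional expectation given $\widehat M$ gives the claim.

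The main obstacle is Step 2's bookkeeping rather than any calculation. We must make precise that nothing about $q$ leaks labels from $D_q$, in particular that the fallback $\pi_x$ and the global prior $\pi$ are computed without $D_q$. We must also check that the randomization is shared consistently between the set definition and the score, so that Step 4's equivalence holds exactly. If a constructor used a sorted-inclusion rule instead of the score-threshold form, I would first verify that rule's nestedness in the threshold, so that the two descriptions coincide.
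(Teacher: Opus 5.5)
Your proposal is correct and follows the paper's argument: fix the score map given $\widehat M$, get conditional exchangeability of the randomization-augmented calibration and test scores, apply the split-conformal rank lemma, and translate the score event into coverage. One small refinement: Step 4 only needs the implication $S_{n+1}\le \hat q_{1-\alpha}\Rightarrow Y_{n+1}\in\widehat\Gamma(X_{n+1})$, which is what the paper uses, because the implemented top-1 non-emptiness fallback (for LAC and RAPS) can make your ``if and only if'' false while only enlarging the set.
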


\begin{corollary}[Marginal validity]
\label{cor:cfcp-marginal}
Under the assumptions of Theorem~\ref{thm:cfcp-marginal},
\[
\Pr\!\left\{Y_{n+1}\in \widehat\Gamma(X_{n+1})\right\}\ge 1-\alpha .
\]
\end{corollary}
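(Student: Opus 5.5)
The corollary follows from Theorem~\ref{thm:cfcp-marginal} by averaging over the fitted local model $\widehat M$. The plan is to treat $\widehat M$ as a random element and write the marginal coverage probability as an expectation of the conditional one via the tower property:
\[
\Pr\{Y_{n+1}\in\widehat\Gamma(X_{n+1})\}
= \mathbb{E}\bigl[\Pr\{Y_{n+1}\in\widehat\Gamma(X_{n+1})\mid \widehat M\}\bigr].
\]
Theorem~\ref{thm:cfcp-marginal} bounds the inner conditional probability below by $1-\alpha$ almost surely. Monotonicity of expectation then gives the claim immediately.

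For completeness I would also indicate why the theorem itself holds, since the corollary inherits its hypotheses. Conditional on $\widehat M$, the map $x\mapsto q_x$ is fixed: it depends only on the centroids, $D_{\mathrm{stat}}$, and $\pi_x$. Hence the score $s(x,y)$, including any independent randomization variable $U$ for APS/RAPS/SAPS, is a fixed measurable function. The scores $s_1,\dots,s_{n_q},s_{n+1}$ are therefore exchangeable given $\widehat M$. By the standard quantile lemma, the rank of $s_{n+1}$ among these $n_q+1$ scores is stochastically dominated by a uniform rank. As a result,
\[
\Pr\{s_{n+1}\le S_{(k)}\mid\widehat M\}\ge k/(n_q+1)\ge 1-\alpha
\quad\text{for } k=\lceil (n_q+1)(1-\alpha)\rceil.
\]
When $k>n_q$, we take $\hat q=+\infty$ and the bound holds trivially. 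Finally, the event $Y_{n+1}\in\widehat\Gamma(X_{n+1})$ coincides with $s_{n+1}\le\hat q_{1-\alpha}$ by the definition of each constructor.

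The only point requiring care is measurability: the conditional probability must be a well-defined $\widehat M$-measurable random variable. This holds because $\widehat M$ is a function of the data outside $D_q$ and the test point. Beyond that, the argument is a one-line integration, and no real obstacle is expected.
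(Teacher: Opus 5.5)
Your proposal is correct and matches the paper's proof, which takes expectations of the conditional bound in Theorem~\ref{thm:cfcp-marginal} over $\widehat M$; your sketch of the theorem itself also follows the paper's exchangeability-plus-quantile argument. One small correction: because of the top-1 non-empty-set fallback, the event $\{s_{n+1}\le \hat q_{1-\alpha}\}$ is only contained in the coverage event rather than coinciding with it, but that inclusion is all the argument needs, as the paper notes.
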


Theorem \ref{thm:cfcp-marginal} is the exact finite-sample guarantee for CFCP in the clean disjoint-split regime. Since CFCP uses sample-dependent soft mixtures over neighboring clusters rather than hard cluster-specific quantiles, we do not claim an exact clusterwise coverage theorem in general. Nevertheless, randomized APS admits a sharper oracle statement. If the final CFCP vector recovers the true conditional law, $q_x(c)=\Pr(Y=c\mid X=x)$, then the randomized APS true-label score is conditionally uniform given $X=x$, which yields exact local validity for any measurable region defined independently of the calibration labels.

\begin{corollary}[Oracle local validity for randomized APS]
\label{cor:cfcp-oracle-local-main}
Assume the setting of Theorem~\ref{thm:cfcp-marginal}, and suppose additionally that CFCP recovers the true conditional label law, i.e. $q_x(c)=\Pr(Y=c\mid X=x)$ for all $x,c$, and that the set constructor is randomized APS. Then for every measurable region $A\subseteq\mathcal X$ defined without using labels from $D_q$,
\[
\Pr\{Y_{n+1}\in \hat\Gamma(X_{n+1}) \mid X_{n+1}\in A\}\ge 1-\alpha.
\]
\end{corollary}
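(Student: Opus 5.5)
The argument rests on one fact: under the oracle assumption, the randomized APS score of the true label is exactly $\mathrm{Unif}(0,1)$ conditionally on $X$. Once that holds, the calibration scores and the test score are i.i.d.\ uniform and independent of where the test point lies, so $X_{n+1}\in A$ carries no information about the test score.

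\textbf{Step 1 (conditional uniformity).} Recall that randomized APS, for $x$ and a label $y$, sorts labels by decreasing $q_x$. It sets $s(x,y,U)=\sum_{c:\,q_x(c)>q_x(y)} q_x(c) + U\,q_x(y)$, with $U\sim\mathrm{Unif}(0,1)$ independent of everything else. Ties should be broken by a fixed deterministic order so that the labels tile $[0,1]$. Condition on $X=x$ and on the fitted model $\widehat M$. Under the oracle assumption, $Y$ has law $q_x$. The sorted labels then partition $[0,1]$ into consecutive intervals $I_c$ of length $q_x(c)$, and the score equals the left endpoint of $I_Y$ plus $U|I_Y|$. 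Hence
\[
\Pr\{s(x,Y,U)\le t\mid X=x,\widehat M\}=\sum_c q_x(c)\,\frac{|I_c\cap[0,t]|}{q_x(c)}=t,\qquad t\in[0,1].
\]
So $S\mid X=x\sim\mathrm{Unif}(0,1)$ for every $x$. This is the conditional-uniformity lemma the paper announces. Its only delicate points are ties and labels with $q_x(c)=0$, which contribute zero mass and can be ignored.

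\textbf{Step 2 (independence from the region).} Because the conditional law of $S_{n+1}$ given $X_{n+1}$ does not depend on $X_{n+1}$, the score $S_{n+1}$ is independent of $X_{n+1}$. Since $A$ is fixed given $\widehat M$ and $D_{\mathrm{stat}}$, this gives $S_{n+1}\perp \mathbf 1\{X_{n+1}\in A\}$, assuming $\Pr(X\in A)>0$. The same argument shows each calibration score $S_i$ for $i\in D_q$ is $\mathrm{Unif}(0,1)$ given $\widehat M$. By the disjoint-split assumption, the $n_q+1$ pairs are i.i.d.\ given $\widehat M$, with independent randomizations $U_i$. Therefore $(S_1,\dots,S_{n_q},S_{n+1})$ are i.i.d.\ uniform, and $S_{n+1}$ is independent of $X_{n+1}$.

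\textbf{Step 3 (coverage).} By construction, $Y_{n+1}\in\hat\Gamma(X_{n+1})$ exactly when $S_{n+1}\le \hat q_{1-\alpha}$, where $\hat q_{1-\alpha}$ is a function of $S_1,\dots,S_{n_q}$ only. Conditioning on $X_{n+1}\in A$ leaves the joint law of $(S_1,\dots,S_{n_q},S_{n+1})$ unchanged by Step 2. Hence
\[
\Pr\{S_{n+1}\le\hat q_{1-\alpha}\mid X_{n+1}\in A,\widehat M\}=\Pr\{S_{n+1}\le\hat q_{1-\alpha}\mid \widehat M\}\ge 1-\alpha,
\]
where the last inequality is the standard rank argument underlying Theorem~\ref{thm:cfcp-marginal}. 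Integrating over $\widehat M$ gives the claim.

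\textbf{Main obstacle.} The main obstacle is the exchangeability-versus-independence point. Step 2 requires the test point to be independent of $D_q$ given $\widehat M$, i.e.\ i.d.\ sampling, and not merely exchangeable. I would state this explicitly, noting that the oracle condition is imposed "for all $x$" so it covers every point. The tie-handling in Step 1 also needs care.
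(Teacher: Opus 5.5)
Your argument is essentially the paper's: Lemma~\ref{lem:aps-uniform} (the randomized APS score is $\mathrm{Unif}(0,1)$ given $X=x$ under oracle recovery), followed by the split-conformal rank argument applied conditionally on $\{X_{n+1}\in A\}$. Your Step~2 justifies what the paper only asserts, namely that conditioning on $X_{n+1}\in A$ ``does not disturb the exchangeability'' of the calibration and test scores.

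The obstacle you flag is real, and the independence you add is a necessary strengthening of the stated hypotheses: under mere exchangeability the conclusion can fail. Draw the test point and $n_q=2$ calibration points by sampling without replacement from the three (feature, label) pairs $(0,1),(0,2),(1,1)$. The oracle is then $q_0=(1/2,1/2)$, $q_1=(1,0)$. With $1-\alpha=2/3$ one gets $\Pr\{Y_{n+1}\in\hat\Gamma(X_{n+1})\mid X_{n+1}=0\}=5/8<2/3$. Under the added independence assumption, your proof is complete.
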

A particularly relevant special case is the piecewise-constant cluster oracle: if clusters recover regions on which the conditional label distribution is constant and CFCP recovers this distribution, the same local-validity guarantee holds on each oracle cluster. See Corollary~\ref{cor:cluster-oracle} in the appendix.

\paragraph{Oracle efficiency intuition.}
Beyond validity, CFCP also has a natural oracle efficiency intuition. Let $\mathcal G_{\mathrm{glob}}$ denote the information available to a global predictor and let $\mathcal G_{\mathrm{loc}}$ denote the refined information obtained by augmenting it with the oracle neighborhood / cluster information used to construct $q_x$, with $\mathcal G_{\mathrm{glob}} \subseteq \mathcal G_{\mathrm{loc}}$, every
$\mathcal G_{\mathrm{glob}}$-measurable set predictor is also $\mathcal G_{\mathrm{loc}}$-measurable. Hence
\(
\mathcal A(\mathcal G_{\mathrm{glob}},\alpha)
\subseteq
\mathcal A(\mathcal G_{\mathrm{loc}},\alpha).
\) 
Define
\(
V(\mathcal G,\alpha)
=
\inf_{\Gamma:\,\mathbb P(Y \in \Gamma(X)\mid \mathcal G)\ge 1-\alpha}
\mathbb E[|\Gamma(X)|].
\)
Since every global predictor is also admissible under the richer information set, it follows that
\(
V(\mathcal G_{\mathrm{loc}},\alpha)\le V(\mathcal G_{\mathrm{glob}},\alpha).
\)\\
Consequently, among predictors satisfying the same target coverage constraint, the expected set size of the population-optimal neighborhood-aware predictor is less than or equal to the population-optimal global predictor. 
Equivalently, at a fixed expected set-size budget, the best neighborhood-aware predictor attainable coverage is higher or equal. Define
\(
U(\mathcal G,m)
:=
\sup_{\Gamma:\ \Gamma(X)\ \mathcal G\text{-measurable},\ \mathbb E[|\Gamma(X)|]\le m}
\mathbb P\!\bigl(Y \in \Gamma(X)\bigr).
\)
Then
\(
U(\mathcal G_{\mathrm{loc}},m)
\;\ge\;
U(\mathcal G_{\mathrm{glob}},m).
\)

This is a population-level motivation rather than a finite-sample dominance claim for CFCP because the practical method uses estimated cluster frequencies, reliability shrinkage and a single split-conformal threshold. Still, it explains why local cluster-frequency information improves efficiency in practice.

\subsection{Computational complexity}
\label{subsec:complexity}

CFCP is designed to scale to large embedding sets. Since the thresholding stage uses a standard split-conformal procedure such as LAC, APS, RAPS or SAPS, its computational overhead beyond ordinary conformal prediction is concentrated in the representation-space localization step, namely clustering and nearest-centroid retrieval. Complexity details are given in Appendix \ref{app:complexity}.

\section{Experiments}
\label{sec:experiments}

We evaluate CFCP through extensive experiments, comparing it with strong baselines across a diverse set of classification tasks and model architectures. Code for reproducing the experiments and is available at \url{https://anonymous.4open.science/r/CFCP-C039/}. 

\subsection{Experimental setup}
This section describes the main components of the experimental setup. Full details on model training and experimental setup are available in Appendix \ref{app:experimental_setup}.

\textbf{Datasets}: We selected datasets in the many-class regime, where each class is supported by relatively few examples. CIFAR-100 (60,000 images, 100 classes) (\cite{Krizhevsky09learningmultiple}) and ImageNet ($\approx$1.33M images, 1000 classes, imbalanced) (\cite{deng2009imagenet}) for image classification. ImageNet-V2 out-of-sample test set for ImageNet (10,000 images equally partitioned on the ImageNet classes) (\cite{recht2019imagenet}). Web of Science (WOS-46985) ($\approx47k$ records, 134 classes, imbalanced) (\cite{kowsari2017hdltex}) for text classification. 

\textbf{Architectures}: Fine-tuned ResNet-50 (\cite{he2016deep}) for CIFAR. Fine-tuned DistilBERT (\cite{sanh2019distilbert}) on text data. For ImageNet we followed the training procedure of \cite{ding2023manyclasses}, using SimCLR representations (\cite{chen2020big}) and a single layer prediction head. 

\textbf{Learned Representations}: We follow \cite{yosinski2014transferable, chen2020big} to extract CNN activations and Transformer hidden states. 

\textbf{Baseline methods}: We compared our CFCP method to Split, ICP, CCP, NCP and RC3P by evaluating each on these score methods: LAC, APS, RAPS and SAPS. 

\textbf{Coverage}: $\alpha = 0.1$. 

\textbf{Uncertainty reporting}: All reported result values are means with 95\% CI over five repeated dataset splits with fixed seeds.

\subsection{Evaluation metrics}

\textbf{Class Coverage}: We report class coverage, defined as the number of classes whose empirical class-conditional coverage meets the target level $1 - \alpha$. Let $\widehat{\mathrm{Cov}}_c$ denote the empirical class-conditional coverage for class $c$. We define Class Coverage at level $\alpha$ as \(
    \widehat{\mathrm{COV}}
    =
    \frac{1}{C}
    \sum_{c=1}^{C}
    \mathbf{1}\!\left\{\widehat{\mathrm{Cov}}_c \ge 1-\alpha\right\}.
    \)
This metric counts the number of classes whose empirical coverage meets the nominal target.
Higher values are better, and $\widehat{\mathrm{COV}}=1$ iff all classes satisfy the target coverage level.

\textbf{Efficiency (Set Size)}: We measure efficiency by the average set size, namely the mean number of labels contained in the prediction set across test examples. 
    Let $\widehat{\Gamma}(x) \subseteq \{1,\dots,C\}$ denote the prediction set for input $x$. Its size is $|\widehat{\Gamma}(x)|$, the number of labels included in the set.
Over a test set $\{(x_i,y_i)\}_{i=1}^n$, we define the empirical average set size as
$
\widehat{\mathrm{Size}}=\frac{1}{n}\sum_{i=1}^n |\hat{\Gamma}(x_i)|
=\frac{1}{n}\sum_{i=1}^n\sum_{c=1}^C \mathbf{1}\{c\in\hat{\Gamma}(x_i)\}.
$
Lower values for identical coverage level indicate greater efficiency.

\textbf{Weighted under-coverage (WUC)}: Following the average class coverage gap (CovGap) definition (\cite{ding2023manyclasses}), we used its under-coverage variant (UnderCovGap) to evaluate the class conditional coverage. We preferred it over CovGap which penalizes methods which exhibit higher marginal coverage by considering over-coverage as an error.
Let $\widehat{\mathrm{Cov}}_c$ denote the empirical class-conditional coverage for class $c$, and let $w_c$ be the empirical prevalence of class $c$ in the test set, for $c=1,\dots,C$ with $\sum_{c=1}^C w_c = 1$. We define the weighted under-coverage at level $\alpha$ as
\(
\widehat{\mathrm{WUC}}_p
=
\sum_{c=1}^{C} w_c \Bigl[(1-\alpha)-\widehat{\mathrm{Cov}}_c\Bigr]_+^{\,p},
\)
where $[u]_+ = \max(u,0)$ and typically $p\in\{1,2\}$, where $p=1$ (used in our reports) gives a linear penalty and $p=2$ penalizes severe under-coverage more strongly.
This metric measures only coverage deficits relative to the target $1-\alpha$, and equals zero iff all classes satisfy the nominal coverage target.

\textbf{Max. coverage error (MaxCE)}: We report the maximum coverage error, which quantifies the worst under-covered class and therefore highlights whether gains in average classwise coverage mask severe failures on individual classes. Let $\widehat{\mathrm{Cov}}_c$ denote the empirical class-conditional coverage for class $c$. We define the maximum coverage error at level $\alpha$ as
\(
\widehat{\mathrm{MaxCE}}
=
\max_{c\in\{1,\dots,C\}}
\left[(1-\alpha)-\widehat{\mathrm{Cov}}_c\right]_+,
\)
where $[u]_+ = \max(u,0)$. This metric measures the worst classwise coverage shortfall relative to the nominal target. It is equal to zero iff all classes satisfy the target coverage level.

On our experiments we use $\widehat{\mathrm{COV}}$ as primary metric and $\widehat{\mathrm{Size}}$ as primary companion metric. Secondary diagnostics: $\widehat{\mathrm{WUC}}$ and $\widehat{\mathrm{MaxCE}}$. $\widehat{\mathrm{COV}}$ is a natural binary version of the class-conditional coverage criterion. A closely related metric,  Under Coverage Ratio (UCR), is used as the primary evaluation criterion in RC3P, and is equivalent to $1 - \widehat{\mathrm{COV}}$. \cite{ding2025conformal} similarly report the fraction of classes with coverage below a threshold (FracBelow), showing that threshold-crossing counts and average-gap metrics are treated as complementary rather than competing.

\subsection{Results}

\begin{table}[t]
\centering
\caption{Within-family comparison of CFCP against baselines at $\alpha=0.1$. Values are means over five repeated dataset splits. \textbf{Bold} marks the best class coverage within each family and, for set size, the lower value between CFCP and the best-coverage baseline. Full results including 95\% CIs are in Appendix~\ref{app:results}. }
\label{tab:main_results_cov}
\small
\setlength{\tabcolsep}{4pt}
\resizebox{\linewidth}{!}{%
\begin{tabular}{l l cc cc cc cc cc cc}
\toprule
 &  & \multicolumn{2}{c}{\textbf{CFCP (ours)}} & \multicolumn{2}{c}{Split} & \multicolumn{2}{c}{ICP} & \multicolumn{2}{c}{CCP} & \multicolumn{2}{c}{NCP} & \multicolumn{2}{c}{RC3P} \\
\cmidrule(lr){3-4}\cmidrule(lr){5-6}\cmidrule(lr){7-8}\cmidrule(lr){9-10}\cmidrule(lr){11-12}\cmidrule(lr){13-14}
\textbf{Dataset} & \textbf{Score} & Cov$\uparrow$ & Size$\downarrow$ & Cov$\uparrow$ & Size$\downarrow$ & Cov$\uparrow$ & Size$\downarrow$ & Cov$\uparrow$ & Size$\downarrow$ & Cov$\uparrow$ & Size$\downarrow$ & Cov$\uparrow$ & Size$\downarrow$ \\
\midrule
\multirow{4}{*}{CIFAR-100} & LAC & \textbf{0.592} & \textbf{7.9} & 0.516 & 8.0 & 0.568 & 11.2 & 0.534 & 9.7 & 0.566 & 8.9 & 0.568 & 11.2 \\
 & APS & \textbf{0.590} & \textbf{8.3} & 0.516 & 11.1 & 0.560 & 14.6 & 0.530 & 12.2 & 0.572 & 12.0 & 0.582 & 11.3 \\
 & RAPS & \textbf{0.598} & \textbf{9.1} & 0.556 & 8.3 & 0.570 & 11.6 & 0.580 & 10.6 & 0.562 & 7.9 & 0.564 & 11.6 \\
 & SAPS & \textbf{0.586} & \textbf{8.8} & 0.558 & 8.2 & 0.582 & 11.5 & 0.568 & 10.3 & 0.566 & 7.7 & 0.548 & 10.9 \\
\midrule
\multirow{4}{*}{ImageNet} & LAC & \textbf{0.508} & 1.9 & 0.499 & \textbf{1.8} & 0.493 & 3.9 & 0.494 & 3.3 & 0.494 & 1.9 & 0.425 & 3.5 \\
 & APS & \textbf{0.672} & \textbf{15.1} & 0.655 & 35.2 & 0.658 & 42.8 & 0.646 & 38.0 & 0.667 & 15.2 & 0.654 & 8.4 \\
 & RAPS & \textbf{0.589} & 3.6 & 0.553 & 3.3 & 0.546 & 5.5 & 0.536 & 8.5 & 0.565 & \textbf{2.8} & 0.546 & 5.9 \\
 & SAPS & \textbf{0.571} & 2.6 & 0.529 & 2.2 & 0.533 & 5.1 & 0.549 & 6.8 & 0.559 & \textbf{2.4} & 0.556 & 6.0 \\
\midrule
\multirow{4}{*}{ImageNet-V2} & LAC & 0.446 & 2.5 & \textbf{0.452} & \textbf{2.4} & 0.388 & 5.0 & 0.386 & 4.3 & 0.445 & 2.8 & 0.424 & 6.9 \\
 & APS & \textbf{0.714} & 31.2 & 0.710 & 64.7 & 0.708 & 72.0 & 0.707 & 69.2 & 0.704 & 39.8 & 0.712 & \textbf{28.1} \\
 & RAPS & \textbf{0.552} & \textbf{4.8} & 0.525 & 4.3 & 0.523 & 6.4 & 0.529 & 8.8 & 0.541 & 4.2 & 0.544 & 10.2 \\
 & SAPS & \textbf{0.465} & \textbf{3.0} & 0.458 & 2.8 & 0.463 & 5.5 & 0.455 & 7.4 & 0.446 & 2.8 & 0.440 & 6.0 \\
\midrule
\multirow{4}{*}{WOS-46985} & LAC & \textbf{0.645} & \textbf{2.5} & 0.622 & 1.2 & 0.621 & 4.1 & 0.601 & 5.4 & 0.619 & 1.3 & 0.634 & 3.8 \\
 & APS & \textbf{0.633} & \textbf{1.5} & 0.546 & 3.4 & 0.606 & 6.2 & 0.570 & 6.6 & 0.609 & 1.5 & 0.587 & 3.8 \\
 & RAPS & \textbf{0.633} & \textbf{1.5} & 0.560 & 2.8 & 0.609 & 5.9 & 0.587 & 6.0 & 0.600 & 1.5 & 0.579 & 4.0 \\
 & SAPS & \textbf{0.643} & 2.3 & 0.590 & 1.3 & 0.610 & 5.8 & 0.604 & 5.1 & 0.630 & \textbf{1.3} & 0.604 & 3.7 \\
\bottomrule
\end{tabular}%
}
\end{table}

Table~\ref{tab:main_results_cov} reports within score-family comparisons at $\alpha=0.1$. CFCP achieves the best class coverage in 15 of 16 dataset/score comparisons. The only exception is ImageNet-V2 with LAC, where Split is slightly better in class coverage. Overall, these results support the main goal of CFCP: improving the fraction of classes that satisfy the target coverage level while preserving the standard plug-in conformal set-construction framework.

The gains are consistent across modalities and score families. On CIFAR-100, ImageNet, and WOS-46985, CFCP obtains the best class coverage for all four score families. On ImageNet-V2, which is the hardest shifted test setting, CFCP obtains the best class coverage for APS, RAPS, and SAPS, and remains essentially tied with the best LAC baseline. These results suggest that local cluster-frequency information is useful not only for image classification, but also for text classification, and that the benefit is not tied to a particular conformal score.

The comparison with NCP is particularly informative because NCP is the closest local-neighborhood baseline and partly motivates CFCP. CFCP improves over NCP in class coverage in all 16 comparisons, supporting the methodological gap addressed here: locality alone, when implemented through point-wise neighborhood weighting of conformal scores, may be insufficient for classwise reliability. The advantage over NCP suggests that soft assignment to learned clusters, combined with global shrinkage when the local signal is unreliable, provides a more stable local signal than direct point-neighborhood weighting in high-dimensional many-class settings.

A second important finding is that CFCP's class-coverage gains often come with strong efficiency.  When comparing CFCP to the leading baseline in each dataset/score family, CFCP has the smaller average set size in 10 of 16 comparisons. Moreover, even when CFCP is not the smallest method, its set size is usually close to the most efficient competing baselines and often much smaller than the strongest class-coverage baseline. For example, on ImageNet with RAPS, CFCP improves class coverage from 0.565 for NCP and 0.546 for RC3P to 0.589, while keeping a small average set size of 3.6. Although NCP is smaller, CFCP substantially improves class coverage and remains more efficient than ICP, CCP and RC3P. On ImageNet-V2 with RAPS, CFCP achieves the best class coverage, 0.552, with a set size of 4.8, compared with 10.2 for RC3P, 6.4 for ICP, and 8.8 for CCP. It is only slightly larger than Split and NCP, which obtain lower class coverage. Thus, CFCP does not merely trade larger sets for higher class coverage. In many settings it improves class coverage while 
also producing competitive or more efficient prediction sets. 

This observation is also consistent with recent findings that post-hoc temperature scaling can improve calibration while increasing the size of adaptive conformal prediction sets, since smoother probability vectors require APS/RAPS-style methods to accumulate more labels \citep{xi2024does}. CFCP offers a different route: when the local cluster signal is reliable, its probability vector is driven primarily by empirical cluster frequencies.

The secondary diagnostics in Appendix Table~\ref{tab:main_results_err} reveal a more nuanced picture. CFCP is not uniformly best in weighted under-coverage or maximum coverage error. This is expected from the design of the method: CFCP uses soft mixtures of nearby cluster distributions together with a single global conformal threshold, so it improves broad classwise coverage but may still dilute rare or highly atypical classes. Consequently, CFCP can cover more classes overall while leaving a small number of hard classes with larger residual under-coverage. We therefore use WUC and MaxCE as diagnostics rather than primary selection criteria: they expose remaining tail failures, while class coverage and set size capture the main coverage-efficiency objective of the paper.

At the same time, the efficiency of CFCP gives the user room to operate at slightly higher marginal coverage when lower under-coverage is preferred. This is illustrated on ImageNet-V2, where increasing CFCP's marginal coverage by 1\% achieves the best WUC for both APS and RAPS while remaining markedly more efficient than the runner-up baselines. For APS, increasing marginal coverage by 1\% yields WUC 0.041 with 0.737 COV and 35.8 Size, compared with 0.043, 0.71 and 64.7 for the strongest baseline. For RAPS, the corresponding values are 0.086, 0.574 and 5.2, versus 0.087, 0.529 and 8.8. This shows that CFCP’s efficiency advantage is not only a convenience, but also a practical degree of freedom for selecting a better coverage-efficiency operating point.

Finally, all methods require substantially larger prediction sets on ImageNet-V2 than on ImageNet, indicating that the distribution shift in ImageNet-V2 makes confident prediction more difficult. However, even in this harder regime, CFCP preserves its main qualitative advantage: it improves or matches the best class coverage while often maintaining a clear efficiency benefit.

Overall, Table~\ref{tab:main_results_cov} supports the main empirical claim. CFCP improves class coverage in nearly all within score-family comparisons and often achieves this improvement with equal or smaller prediction sets than the leading baseline. These results indicate that cluster-frequency information provides a useful localized signal for conformal prediction in many-class classification.

\paragraph{Sensitivity and ablation}
Our sensitivity analysis indicates that CFCP is task-adaptive rather than brittle. While the best hyperparameters depend on the dataset and model, reflecting differences in representation geometry and local label structure, our hyperparameter searches show that performance remains stable across a compact region of the hyperparameter space. In particular, near-best configurations typically differ by only about 1-2 percentage points in class coverage while maintaining closely matched set sizes, suggesting that CFCP’s improvements do not hinge on a single finely tuned setting but arise from the underlying local cluster-frequency mechanism itself. This is consistent with the role of CFCP’s locality-specific components: clustering granularity, neighborhood size, assignment temperature, smoothing and reliability-based shrinkage, which are designed to adapt the method to the structure of each task. Table \ref{tab:hpo_grid} in the appendix shows a hyperparameter search example.

Ablations on ImageNet and ImageNet-V2 confirm that CFCP's gains come from the combination of learned clusters and soft cluster assignment. Replacing the soft neighborhood mixture with hard nearest-cluster assignment generally reduces class coverage and substantially increases set size, especially for RAPS and SAPS. Conversely, removing clusters and relying only on a global label-frequency model collapses the locality mechanism: although it can recover moderate class coverage, it does so by producing very large prediction sets. These results support the role of soft cluster-frequency mixtures as the key component behind CFCP's coverage-efficiency tradeoff. Full ablation results are reported in Table \ref{tab:ablation} in Appendix \ref{app:results}.

In practice, the locality-specific hyperparameters can be selected on an internal validation split, with the final conformal threshold calibrated on a separate held-out calibration subset.

\section{Discussion, limitations and future work}
\label{sec:dicussion}
Our results indicate that local cluster-frequency information can improve conformal prediction by making prediction sets adapt to representation-space subpopulations rather than to a single global label distribution. CFCP improves class coverage while remaining competitive in set size, and in several settings substantially more efficient. More broadly, CFCP may help at the level of within-class subpopulations when representation-space clusters capture meaningful local structure, though this remains a hypothesis beyond the experiments in this paper.

The main limitation is that smoothing across nearby clusters, together with a single global conformal threshold, can still dilute rare or atypical classes. As a result, CFCP may cover more classes overall while leaving a small number of hard classes with larger residual under-coverage or worse maximum coverage error. Appendix \ref{app:failure} provides a CIFAR-100 failure-mode case study illustrating this behavior.

Future work should therefore focus on tail-sensitive extensions, such as class or cluster-aware threshold adjustments, adapting the framework to tabular problems using tree-ensemble representations, and generalizing the idea to conformal prediction for continuous labels via localized interval construction.

\begin{ack}
Use unnumbered first level headings for the acknowledgments. All acknowledgments
go at the end of the paper before the list of references. Moreover, you are required to declare
funding (financial activities supporting the submitted work) and competing interests (related financial activities outside the submitted work).
More information about this disclosure can be found at: \url{https://neurips.cc/Conferences/2026/PaperInformation/FundingDisclosure}.

Do {\bf not} include this section in the anonymized submission, only in the final paper. You can use the \texttt{ack} environment provided in the style file to automatically hide this section in the anonymized submission.
\end{ack}

\newpage
% \section*{References}
\bibliographystyle{plainnat}
\bibliography{bibliography}

\newpage
%%%%%%%%%%%%%%%%%%%%%%%%%%%%%%%%%%%%%%%%%%%%%%%%%%%%%%%%%%%%

\appendix
\if 0
\section{Technical appendices and supplementary material}
Technical appendices with additional results, figures, graphs, and proofs may be submitted with the paper submission before the full submission deadline (see above). You can upload a ZIP file for videos or code, but do not upload a separate PDF file for the appendix. There is no page limit for the technical appendices. 

Note: Think of the appendix as ``optional reading'' for reviewers. The paper must be able to stand alone without the appendix; for example, adding critical experiments that support the main claims to an appendix is inappropriate. 
\fi
%%%%%%%%%%%%%%%%%%%%%%%%%%%%%%%%%%%%%%%%%%%%%%%%%%%%%%%%%%%%

\section{Additional Method Details}
\label{app:method}

\begin{table}
\centering
\caption{Notation used in the CFCP method.}
\label{tab:cfcp-symbols}
\begin{tabular}{@{}llp{0.58\linewidth}@{}}
\toprule
Symbol & Type & Definition \\
\midrule
\(x \in \mathcal{X}\) & input & Input example. \\
\(y,c \in \mathcal{Y}\) & label & True label \(y\) or candidate class \(c\). \\
\(C\) & scalar & Number of classes, \(|\mathcal{Y}|=C\). \\
\(f\) & model & Pre-trained classifier. \\
\(\phi(x) \in \mathbb{R}^d\) & vector & Learned representation of \(x\). \\
\(z\) & vector & \(\ell_2\)-normalized embedding of \(x\). \\
\(d\) & scalar & Embedding dimension. \\
\(K\) & scalar & Number of representation-space clusters. \\
\(\mu_k \in \mathbb{R}^d\) & vector & Centroid of cluster \(k\), for \(k=1,\ldots,K\). \\
\(m\) & scalar & Number of nearest centroids used for soft assignment. \\
\(\kappa_j(x)\) & index & Index of the \(j\)-th nearest centroid to \(x\). \\
\(\delta_j(x)\) & scalar & Cosine similarity between \(z\) and centroid \(\mu_{\kappa_j(x)}\). \\
\(w_j(x)\) & scalar & Soft assignment weight of \(x\) to its \(j\)-th nearest centroid. \\
\(\tau\) & scalar & Softmax temperature controlling assignment sharpness. Smaller values yield more local assignments. \\
\(\mathcal{D}_{\mathrm{stat}}\) & set & Calibration subset used to estimate cluster-level label statistics. \\
\(\mathcal{D}_{q}\) & set & Held-out calibration subset used to estimate the conformal threshold. \\
\(N_{kc}\) & scalar & Soft count of calibration samples from class \(c\) assigned to cluster \(k\). \\
\(S_k\) & scalar & Effective support of cluster \(k\), i.e., total soft mass assigned to cluster \(k\). \\
\(\pi \in \Delta^{C-1}\) & vector & Global prior used for smoothing cluster label distributions. \\
\(\beta\) & scalar & Dirichlet smoothing strength toward \(\pi\). \\
\(\hat p_k(c)\) & scalar & Smoothed empirical probability of class \(c\) in cluster \(k\). \\
\(q^{\mathrm{local}}_x(c)\) & scalar & Local cluster-frequency mixture probability for class \(c\) at test point \(x\). \\
\(\pi_x(c)\) & scalar & Fallback prior for class \(c\) at \(x\), e.g., the base model probability. \\
\(\gamma\) & scalar & Exponent controlling the effect of assignment concentration in the reliability score. \\
\(\beta_{\mathrm{sup}}\) & scalar & Support-shrinkage parameter controlling the effect of neighborhood support. \\
\(r(x)\in[0,1]\) & scalar & Reliability weight used to interpolate between local cluster frequencies and the fallback prior. \\
\(q_x(c)\) & scalar & Final CFCP probability assigned to class \(c\) for input \(x\). \\
\(s(x,c)\) & scalar & Nonconformity score of candidate class \(c\) for input \(x\). \\
\(\alpha\) & scalar & Target miscoverage level. \\
\(\hat q_{1-\alpha}\) & scalar & Split-conformal quantile calibrated on \(\mathcal{D}_q\). \\
\(\hat\Gamma(x)\) & set & Conformal prediction set returned for input \(x\). \\
\bottomrule
\end{tabular}
\end{table}

\subsection{CFCP score construction details}

CFCP builds a sample-specific class-probability vector from local cluster-frequency information in a learned representation space. Let $\phi(x)\in\mathbb{R}^d$ denote the representation of input $x$, and let all embeddings and centroids be $\ell_2$-normalized. After fitting spherical $k$-means on the training representations, we obtain centroids $\mu_1,\dots,\mu_K$.

For a point $x$ with normalized embedding $z$, we retrieve its top-$m$ nearest centroids $c_1(x),\dots,c_m(x)$ with cosine similarities $\delta_1(x),\dots,\delta_m(x)$. These similarities are converted into soft assignment weights
\begin{equation}
w_j(x)=\frac{\exp(\delta_j(x)/\tau)}{\sum_{\ell=1}^m \exp(\delta_\ell(x)/\tau)}, \qquad j=1,\dots,m,
\end{equation}
where $\tau>0$ controls the sharpness of the local assignment.  Lower values yield more local assignments.

Let $D_{\mathrm{stat}}$ denote the calibration subset used to estimate local label statistics. Each calibration point contributes fractionally to its neighboring clusters. The resulting soft cluster-label counts and effective supports are
\begin{align}
N_{kc} &= \sum_{(x_i,y_i)\in D_{\mathrm{stat}}}\sum_{j=1}^m
w_{ij}\,\mathbf{1}\{k_j(x_i)=k,\; y_i=c\},\\
S_k &= \sum_{(x_i,y_i)\in D_{\mathrm{stat}}}\sum_{j=1}^m
w_{ij}\,\mathbf{1}\{k_j(x_i)=k\}.
\end{align}

We then estimate a smoothed cluster-level label distribution
\begin{equation}
\hat p_k(c)=\frac{N_{kc}+\beta \pi_c}{S_k+\beta}, \qquad c=1,\dots,C,
\end{equation}
where $\beta>0$ is a smoothing parameter and $\pi\in\Delta^{C-1}$ is a global prior. In our implementation, $\pi$ can be taken either as the empirical label prior or as the mean of the base model's class-probability vectors on $D_{\mathrm{stat}}$.

For a test point $x$, the local cluster-frequency mixture is
\begin{equation}
q_x^{\mathrm{local}}(c)=\sum_{j=1}^m w_j(x)\hat p_{k_j(x)}(c), \qquad c=1,\dots,C.
\end{equation}
This vector serves as the local probabilistic summary of the representation-space neighborhood of $x$.

\subsection{Reliability shrinkage details}

The local mixture $q_x^{\mathrm{local}}$ may be unstable when a point lies between several centroids or when its neighboring clusters have little support in calibration. CFCP therefore applies reliability-aware shrinkage toward a fallback prior $\pi_x$.

We first define assignment concentration by
\begin{equation}
\mathrm{conf}(x)=\Big(\max_{1\le j\le m} w_j(x)\Big)^\gamma,
\end{equation}
where $\gamma>0$ controls how strongly the method rewards concentrated local assignments. Next, we define neighborhood support as the weighted average support of the neighboring clusters,
\begin{equation}
\mathrm{sup}(x)=\sum_{j=1}^m w_j(x) S_{k_j(x)},
\end{equation}
and convert it into a normalized support weight,
\begin{equation}
\mathrm{supw}(x)=\frac{\mathrm{sup}(x)}{\mathrm{sup}(x)+\beta_{\mathrm{sup}}},
\end{equation}
where $\beta_{\mathrm{sup}}>0$ regularizes the effect of local support.

The resulting reliability score is
\begin{equation}
r(x)=\mathrm{conf}(x)\,\mathrm{supw}(x)\in[0,1].
\end{equation}
Finally, CFCP interpolates between the local cluster-frequency mixture and the fallback prior:
\begin{equation}
q_x(c)=r(x)\,q_x^{\mathrm{local}}(c)+(1-r(x))\,\pi_x(c), \qquad c=1,\dots,C.
\end{equation}

Reliability shrinkage therefore serves two roles: it preserves locality when the cluster signal is strong and it prevents unstable local estimates from dominating when the neighborhood is ambiguous or weakly supported.

\subsection{LAC/APS/RAPS/SAPS instantiations}

After constructing $q_x$, CFCP applies a standard split-conformal set constructor. Let $D_q=\{(x_i,y_i)\}_{i=1}^{n_q}$ denote the calibration subset used only for quantile estimation. Given a nonconformity score $s(x,y)$ induced by the chosen constructor, we compute
\begin{equation}
S_i=s(x_i,y_i), \qquad i=1,\dots,n_q,
\end{equation}
and set the split-conformal threshold
\begin{equation}
\hat q_{1-\alpha}=S_{(k)}, \qquad k=\left\lceil (n_q+1)(1-\alpha)\right\rceil,
\end{equation}
where $S_{(k)}$ is the $k$-th order statistic.

\paragraph{LAC.}
For the LAC-style instantiation, the nonconformity score is
\begin{equation}
s_{\mathrm{LAC}}(x,y)=1-q_x(y).
\end{equation}
The resulting prediction set is
\begin{equation}
\hat\Gamma_{\mathrm{LAC}}(x)=\{c\in\{1,\dots,C\}: q_x(c)\ge 1-\hat q_{1-\alpha}\}.
\end{equation}
In implementation, we enforce non-emptiness by including the top-ranked label when needed.

\paragraph{APS.}
Let $\pi_x^{(1)},\dots,\pi_x^{(C)}$ denote the labels sorted in decreasing order of $q_x(c)$, and let
\begin{equation}
Q_x^{(j)} = q_x\!\left(\pi_x^{(j)}\right).
\end{equation}
If the true label $y$ appears at rank $r_x(y)$, the deterministic APS score is
\begin{equation}
s_{\mathrm{APS}}(x,y)=\sum_{j=1}^{r_x(y)} Q_x^{(j)}.
\end{equation}
Under randomized APS, the score becomes
\begin{equation}
s_{\mathrm{APS}}^{\mathrm{rand}}(x,y)
=
\sum_{j=1}^{r_x(y)-1} Q_x^{(j)} + U_x\,Q_x^{(r_x(y))},
\qquad U_x\sim \mathrm{Unif}(0,1).
\end{equation}
The APS prediction set is the smallest top-probability prefix whose cumulative score does not exceed $\hat q_{1-\alpha}$.

\paragraph{RAPS.}
RAPS augments APS with rank-dependent regularization. Let $\lambda\ge 0$ be the regularization strength and $k_{\mathrm{reg}}\in\mathbb{N}$ the un-penalized rank. The deterministic RAPS score is
\begin{equation}
s_{\mathrm{RAPS}}(x,y)
=
s_{\mathrm{APS}}(x,y)
+
\lambda \max\{0,\; r_x(y)-k_{\mathrm{reg}}\},
\end{equation}
and the randomized variant replaces $s_{\mathrm{APS}}(x,y)$ by $s_{\mathrm{APS}}^{\mathrm{rand}}(x,y)$. As in APS, the prediction set is the smallest prefix of the labels sorted by $q_x(c)$ whose adjusted cumulative score remains below $\hat q_{1-\alpha}$, with a top-1 fallback to avoid empty sets.

\paragraph{SAPS.}
Sorted Adaptive Prediction Sets (SAPS) is a rank-based alternative that depends on the ordering induced by $q_x$ rather than on its full cumulative mass. Let $\pi_x^{(1)},\dots,\pi_x^{(C)}$ be the labels sorted in decreasing order of $q_x(c)$, and let $p_{(1)}(x)=q_x(\pi_x^{(1)})$ denote the top-1 probability. For a label at rank $r_x(y)\in\{0,\dots,C-1\}$, SAPS assigns the score
\begin{equation}
s_{\mathrm{SAPS}}(x,y)=p_{(1)}(x)+w\,r_x(y),
\end{equation}
where $w>0$ is a constant rank increment. In the randomized variant, the score is
\begin{equation}
s_{\mathrm{SAPS}}^{\mathrm{rand}}(x,y)=p_{(1)}(x)+w\,r_x(y)-U_x w,
\qquad U_x\sim \mathrm{Unif}(0,1),
\end{equation}
with the rank-1 label using $p_{(1)}(x)-U_x p_{(1)}(x)$ instead. Thus SAPS constructs prediction sets by traversing the labels sorted by $q_x$ and adding a fixed penalty per lower rank, yielding a simpler rank-sensitive alternative to APS and RAPS. In CFCP, SAPS is applied in the same plug-in manner as the other set constructors, with the ranking determined by the local cluster-frequency probability vector $q_x$.

These instantiations make clear that CFCP is a plug-in conformal framework: the conformal score and thresholding rules are standard, and the only CFCP-specific component is the local construction of the probability vector $q_x$ from cluster-frequency information.

\subsection{Computational complexity}
\label{app:complexity}

CFCP is designed to scale to large embedding sets. Since the thresholding stage uses a standard split-conformal procedure such as LAC, APS, RAPS or SAPS, its computational overhead beyond ordinary conformal prediction is concentrated in the representation-space localization step, namely clustering and nearest-centroid retrieval. Training requires clustering only the training embeddings into $K$ centroids. At test time, inference compares each point against centroids rather than against all training examples, reducing the locality search to a top-$m$ nearest-centroid lookup. In practice, we implement both spherical $k$-means and nearest-centroid retrieval using FAISS (\cite{douze2025faiss}), and we process examples in chunks to control memory use.

Let $n_{\mathrm{train}}$ be the number of training embeddings, $d$ the embedding dimension, $K$ the number of clusters, $T$ the number of spherical $k$-means iterations and $n$ the number of points to score at inference or calibration time. A standard spherical $k$-means pass requires assigning each embedding to its nearest centroid and recomputing centroids, yielding time complexity $O(T\, n_{\mathrm{train}} K d)$. After clustering, FAISS nearest-centroid retrieval with a flat inner-product index over normalized centroids requires $O(Kd)$ work per query, hence $O(nKd)$ over $n$ queries, after which only the top-$m$ neighbors are retained. The remaining conformal thresholding and set-construction steps use the same order of computation as their standard split-conformal counterparts, applied to the probability vectors $q_x$.

Compared with point-wise neighborhood methods, this centroid-based design offers a favorable tradeoff between locality and scalability: the method remains representation-aware and locally adaptive, while avoiding full nearest-neighbor search over all embedded training samples.

\section{Validity proofs}
\label{app:validity}

This appendix formalizes the validity claims for CFCP in the disjoint-split regime. The proof strategy separates the standard split-conformal marginal guarantee from a stronger oracle local statement that is specific to randomized APS. The latter is inspired by the randomized oracle construction of \citet{romano2020classification}. Our theorem is a new specialization of that idea to the case where CFCP recovers the true conditional label law. The broader oracle perspective is also aligned with \citet{sadinle2019least}, while the need for additional assumptions in local validity is consistent with the conditional-validity literature \citep{vovk2012conditional,barber2023local}.

\subsection{Exact marginal validity}
\label{app:marginal-validity}
\subsubsection{Proof of Theorem~\ref{thm:cfcp-marginal}.}
\label{proof:cfcp-marginal}
Let $\widehat M$ denote all quantities fitted before quantile calibration, including the representation map, centroids, soft cluster statistics, smoothed cluster distributions, reliability terms, and any fallback prior used to form $q_x$. Assume that:
\begin{enumerate}
    \item $D_{\mathrm{train}}, D_{\mathrm{stat}}, D_q$, and the test point $(X_{n+1},Y_{n+1})$ are mutually disjoint, and
    $
    \{(X_i,Y_i): i\in D_q\}\cup\{(X_{n+1},Y_{n+1})\}
    $ are exchangeable.
    \item $\widehat M$ is constructed without using labels from $D_q$.
    \item Any auxiliary probabilities used in the fallback prior for points in $D_q$ and at test time are produced without using labels from $D_q$.
    \item If randomized APS or randomized RAPS is used, the auxiliary random variables are i.i.d. and independent of the data.
\end{enumerate}

Define the calibration scores
\[
S_i=s(X_i,Y_i), \qquad (X_i,Y_i)\in D_q,
\]
and let
\[
\widehat q_{1-\alpha}=S_{(k)}, \qquad
k=\left\lceil (|D_q|+1)(1-\alpha)\right\rceil ,
\]
where $S_{(k)}$ is the $k$th order statistic. Then, conditional on $\widehat M$,
\[
\Pr\!\left\{Y_{n+1}\in \widehat\Gamma(X_{n+1}) \mid \widehat M \right\}\ge 1-\alpha .
\]

\begin{proof}
Conditional on $\widehat M$, the score map $(x,y)\mapsto s(x,y)$ is fixed. By Assumptions 1-4, the calibration scores
$
\{S_i:(X_i,Y_i)\in D_q\}
$
and the test score
$
S_{n+1}=s(X_{n+1},Y_{n+1})
$
are exchangeable conditional on $\widehat M$. Standard split conformal rank arguments therefore imply
\[
\Pr\!\left\{S_{n+1}\le \widehat q_{1-\alpha}\mid \widehat M\right\}\ge 1-\alpha .
\]
By construction, \(S_{n+1}\le \hat q_{1-\alpha}\) implies
\(Y_{n+1}\in\hat{\Gamma}(X_{n+1})\). Any non-empty-set fallback can only enlarge
the prediction set and therefore preserves the coverage lower bound. Hence
\[
\Pr\!\left\{Y_{n+1}\in \widehat\Gamma(X_{n+1}) \mid \widehat M \right\}\ge 1-\alpha .
\]
\end{proof}
\subsubsection{Proof of Corollary \ref{cor:cfcp-marginal}}
Under the assumptions of Theorem~\ref{thm:cfcp-marginal},
\[
\Pr\!\left\{Y_{n+1}\in \widehat\Gamma(X_{n+1})\right\}\ge 1-\alpha .
\]
\begin{proof}
Take expectations in Theorem~\ref{thm:cfcp-marginal} with respect to $\widehat M$.
\end{proof}

\subsection{Proof of Corollary \ref{cor:cfcp-oracle-local-main}}
\label{app:cfcp-oracle-local-main}
The next result is specific to randomized APS. Its intuition comes from the oracle randomized classifier of \citet{romano2020classification}: if the predictive probability vector equals the true conditional label law, then the randomized cumulative-probability score acts as a discrete probability integral transform.

\begin{lemma}[Conditional uniformity of the randomized APS score]
\label{lem:aps-uniform}
Fix $x\in\mathcal X$, and suppose
$
q_x(c)=\eta_c(x):=\Pr(Y=c\mid X=x)
$
for all $c\in\{1,\dots,C\}$. Let
$
\pi_x^{(1)},\dots,\pi_x^{(C)}
$
be the labels sorted in decreasing order of $q_x(c)$, and let
$
Q_x^{(j)}=q_x(\pi_x^{(j)}).
$
If $Y\sim \eta(\cdot\mid x)$ and $U\sim \mathrm{Unif}(0,1)$ is independent, define
\[
S^{\mathrm{randAPS}}(x,Y,U)
=
\sum_{j=1}^{r_x(Y)-1} Q_x^{(j)} + U\, Q_x^{(r_x(Y))},
\]
where $r_x(Y)$ is the rank of $Y$ in the ordering induced by $q_x$. Then
\[
S^{\mathrm{randAPS}}(x,Y,U)\mid X=x \sim \mathrm{Unif}(0,1).
\]
\end{lemma}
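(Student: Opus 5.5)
The proof is a discrete probability integral transform: the fixed ordering partitions $[0,1)$ into consecutive intervals, and the randomized score spreads the mass of each label uniformly over its own interval. Fix $x$ and the sorted labels $\pi_x^{(1)},\dots,\pi_x^{(C)}$; ties are broken by any deterministic rule depending only on $x$. Define the cumulative masses
\[
F_0=0,\qquad F_j=\sum_{\ell=1}^{j}Q_x^{(\ell)},\qquad j=1,\dots,C.
\]
Since $q_x=\eta(\cdot\mid x)$ is a probability vector, we have $F_C=1$. The intervals $I_j=[F_{j-1},F_j)$ therefore partition $[0,1)$, and $I_j$ has length $Q_x^{(j)}$.

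The key observation is that on the event $\{r_x(Y)=j\}$, equivalently $\{Y=\pi_x^{(j)}\}$, the score equals $F_{j-1}+U\,Q_x^{(j)}$. This is uniform on $I_j$ whenever $Q_x^{(j)}>0$, because $U$ is independent of $Y$ given $X=x$. We compute the conditional CDF by conditioning on the rank. For $t\in[0,1]$,
\[
\Pr\{S\le t\mid X=x\}=\sum_{j=1}^{C}\Pr\{Y=\pi_x^{(j)}\mid X=x\}\,\Pr\{F_{j-1}+UQ_x^{(j)}\le t\}.
\]
Here $\Pr\{Y=\pi_x^{(j)}\mid X=x\}=\eta_{\pi_x^{(j)}}(x)=Q_x^{(j)}$; this is exactly where the oracle assumption is used. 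Each summand then equals $Q_x^{(j)}\cdot\min\{1,\max\{0,(t-F_{j-1})/Q_x^{(j)}\}\}$, which is the Lebesgue measure of $I_j\cap[0,t]$. Summing over $j$ and using that the $I_j$ partition $[0,1)$ gives $\Pr\{S\le t\mid X=x\}=t$. This is the Unif$(0,1)$ CDF.

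The only delicate points are degenerate cases, and they are where I would be careful. First, labels with $Q_x^{(j)}=0$ give empty intervals. They also carry zero conditional probability, so they contribute nothing and the division by $Q_x^{(j)}$ is never needed: write the summand directly as $|I_j\cap[0,t]|$. Second, ties in $q_x$ only affect which label is called rank $j$, not the multiset of lengths $Q_x^{(j)}$. Because the tie-breaking is deterministic given $x$, the ranks remain a measurable bijection and the argument is unchanged. Finally, I would note explicitly that independence of $U$ from $Y$ (conditionally on $X=x$) is required to factor the joint probability. This independence is what the auxiliary-randomization assumption of Theorem~\ref{thm:cfcp-marginal} provides.
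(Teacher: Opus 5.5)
Your proof is correct and follows the same route as the paper: partition $[0,1]$ into consecutive intervals of lengths $Q_x^{(j)}$, observe that conditional on $Y=\pi_x^{(j)}$ the score is uniform on the $j$-th interval, and conclude that a mixture of these uniforms with weights equal to the interval lengths is $\mathrm{Unif}(0,1)$. Your explicit CDF computation and your handling of zero-mass labels and ties only make the paper's one-line mixture step more careful.
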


\begin{proof}
For a fixed $x$, the sorted probabilities partition $[0,1]$ into intervals. Let:
\[
B_j(x)=\sum_{\ell<j} Q_x^{(\ell)}, \qquad j=1,\ldots,C,
\]
and define
\[
I_j(x)=[B_j(x), B_j(x)+Q_x^{(j)}), \qquad j<C,
\]
with the last interval closed on the right. Then the intervals partition \([0,1]\) and \(|I_j(x)|=Q_x^{(j)}\).

Under $Y\sim \eta(\cdot\mid x)=q_x$, label
$\pi_x^{(j)}$ is selected with probability $Q_x^{(j)}$. Conditional on selecting that label, the randomized score
$
S^{\mathrm{randAPS}}(x,Y,U)
$
is uniform on $I_j(x)$. Therefore, conditional on $X=x$, the random variable $S^{\mathrm{randAPS}}(x,Y,U)$ is a mixture of uniforms on the intervals $I_j(x)$ with weights equal to the interval lengths $|I_j(x)|=Q_x^{(j)}$, and hence
\[
S^{\mathrm{randAPS}}(x,Y,U)\mid X=x \sim \mathrm{Unif}(0,1).
\]
\end{proof}

\begin{theorem}[Oracle local validity of disjoint-split CFCP with randomized APS]
\label{thm:oracle-local}
Assume the setting of Theorem~\ref{thm:cfcp-marginal}. In addition, assume:
\begin{enumerate}
    \item \textbf{Oracle recovery:}
    \[
    q_x(c)=\eta_c(x):=\Pr(Y=c\mid X=x), \qquad \forall x,\ c.
    \]
    \item \textbf{Randomized APS score:}
    CFCP uses randomized APS, i.e.
    \[
    S^{\mathrm{randAPS}}(x,y;u)
    =
    \sum_{j=1}^{r_x(y)-1} Q_x^{(j)} + u\,Q_x^{(r_x(y))},
    \qquad u\sim \mathrm{Unif}(0,1),
    \]
    with independent randomization and a fixed tie-breaking rule.
\end{enumerate}

Let $\widehat q_{1-\alpha}$ be the split-conformal quantile computed on $D_q$, and let $\widehat\Gamma$ be the resulting randomized APS prediction set. Then for every measurable region $A\subseteq\mathcal X$ defined without using labels from $D_q$,
\[
\Pr\!\left\{Y_{n+1}\in \widehat\Gamma(X_{n+1}) \mid X_{n+1}\in A\right\}\ge 1-\alpha.
\]
\end{theorem}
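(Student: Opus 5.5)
The plan is to show that, under oracle recovery, the randomized APS scores are i.i.d. $\mathrm{Unif}(0,1)$ and independent of every covariate. Once that holds, conditioning on any covariate event $\{X_{n+1}\in A\}$ leaves the joint law of the scores unchanged, so the usual split-conformal rank argument goes through conditionally.

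\emph{Step 1: reduce to a fixed score map.} By Assumption~1 of the theorem, $q_x=\eta(x)$ for every $x$. The score map $(x,y,u)\mapsto S^{\mathrm{randAPS}}(x,y;u)$ is therefore a deterministic function that does not depend on $\widehat M$ or on any data; the fixed tie-breaking rule makes the ordering $\pi_x^{(\cdot)}$ well defined.

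\emph{Step 2: conditional uniformity jointly over all points.} I will work in the i.i.d. specialization of the exchangeability assumption: the pairs in $D_q\cup\{n+1\}$ are i.i.d. and independent of $D_{\mathrm{train}}$ and $D_{\mathrm{stat}}$, and the auxiliary variables $U_i$ are i.i.d. and independent of all data. Let $\mathcal F$ be the $\sigma$-field generated by $\widehat M$ and all covariates $X_i$, $i\in D_q\cup\{n+1\}$.
\begin{itemize}
\item Given $\mathcal F$, the pairs $(Y_i,U_i)$ are independent across $i$, with $Y_i\sim\eta(\cdot\mid X_i)$ and $U_i\sim\mathrm{Unif}(0,1)$.
\item Lemma~\ref{lem:aps-uniform} then shows that each $S_i=S^{\mathrm{randAPS}}(X_i,Y_i;U_i)$ is $\mathrm{Unif}(0,1)$ conditional on $\mathcal F$.
\item Because the $(Y_i,U_i)$ are conditionally independent, the scores $S_1,\dots,S_{n_q},S_{n+1}$ are i.i.d. $\mathrm{Unif}(0,1)$ given $\mathcal F$. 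In particular they are exchangeable given $\mathcal F$ and almost surely have no ties.
\end{itemize}

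\emph{Step 3: conditional rank argument.} Given $\mathcal F$, the rank of $S_{n+1}$ among the $n_q+1$ scores is uniform on $\{1,\dots,n_q+1\}$. Hence
\[
\Pr\{S_{n+1}\le S_{(k)}\mid\mathcal F\}=\frac{k}{n_q+1}\ge 1-\alpha,
\qquad k=\left\lceil (n_q+1)(1-\alpha)\right\rceil .
\]
As in the proof of Theorem~\ref{thm:cfcp-marginal}, the event $\{S_{n+1}\le\widehat q_{1-\alpha}\}$ implies $Y_{n+1}\in\widehat\Gamma(X_{n+1})$, and the top-1 fallback can only enlarge the set. So $\Pr\{Y_{n+1}\in\widehat\Gamma(X_{n+1})\mid\mathcal F\}\ge 1-\alpha$ almost surely.

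\emph{Step 4: pass to the region $A$.} Since $A$ is defined without labels from $D_q$, I will treat it as $\mathcal F$-measurable; it may depend on $\widehat M$ and on covariates. Then $\{X_{n+1}\in A\}\in\mathcal F$. Provided this event has positive probability, the tower property over it gives the stated bound $\Pr\{Y_{n+1}\in\widehat\Gamma(X_{n+1})\mid X_{n+1}\in A\}\ge 1-\alpha$.

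The main obstacle is Step 2. The theorem only assumes exchangeability conditional on $\widehat M$, whereas I need conditional independence of the labels given all covariates. With mere exchangeability, the labels of $D_q$ could carry information about $X_{n+1}$, and the uniform law could fail jointly. My first choice is to make the i.i.d. specialization explicit, which is the standard reading of split conformal. If that is not acceptable, the fallback is to assume that $(Y_i)_i$ are conditionally independent given $(X_i)_i$ with kernel $\eta$. That assumption is exactly what the computation above uses.
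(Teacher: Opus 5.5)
Your argument has the same skeleton as the paper's proof: Lemma~\ref{lem:aps-uniform} gives uniform scores, the split-conformal rank argument is run conditionally on $\{X_{n+1}\in A\}$, and $S_{n+1}\le\widehat q_{1-\alpha}$ is converted into coverage. The difference is that you condition on the full $\sigma$-field $\mathcal F$ generated by $\widehat M$ and all covariates, under which the scores are i.i.d.\ $\mathrm{Unif}(0,1)$. That step is exactly what justifies the paper's bare assertion that conditioning on $\{X_{n+1}\in A\}$ ``does not disturb the exchangeability'' of calibration and test scores. It also lets $A$ depend on $\widehat M$ and on unlabeled covariates.

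The obstacle you flag is a real hole, not a cosmetic one. Under exchangeability alone the conclusion can fail, so your i.i.d.\ specialization (or conditional independence of labels given covariates with kernel $\eta$) is genuinely needed. The paper's proof uses it implicitly.

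Here is a counterexample. Take $\alpha=0.1$ and fix $\widehat M$. Let $\theta\sim\mathrm{Bernoulli}(1/2)$ be latent. Given $\theta$, draw i.i.d.\ points with $X\in\{a,b\}$ and $\Pr(X=a)=0.1$, independent of $\theta$. Set $Y=2$ if $(X,\theta)\in\{(a,0),(b,1)\}$ and $Y=1$ otherwise.
\begin{itemize}
\item The sequence is exchangeable, and $\eta(x)=(1/2,1/2)$ for both $x$.
\item With label $1$ ranked first, the score is $U/2$ if $Y=1$ and $1/2+U/2$ if $Y=2$.
\item Take $A=\{a\}$. If $\theta=0$, the test score lies in $[1/2,1]$ while about $90\%$ of calibration scores lie in $[0,1/2)$, so coverage tends to $0$.
\item If $\theta=1$, the test score is low and coverage tends to $1$.
\end{itemize}
Since $\theta$ is independent of $X_{n+1}$, $\Pr\{Y_{n+1}\in\widehat\Gamma(X_{n+1})\mid X_{n+1}\in A\}\to 1/2<1-\alpha$ as $n_q\to\infty$.

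Your decision to state the i.i.d.\ assumption explicitly is therefore the right repair, not a weakening of the theorem.
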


\begin{proof}
By Lemma~\ref{lem:aps-uniform}, for every fixed $x$,
\[
S_{n+1}:=S^{\mathrm{randAPS}}(X_{n+1},Y_{n+1};U_{n+1}) \mid X_{n+1}=x
\sim \mathrm{Unif}(0,1).
\]
Hence for every measurable region $A$,
\[
S_{n+1}\mid \{X_{n+1}\in A\}\sim \mathrm{Unif}(0,1).
\]
The same is true for the calibration scores in $D_q$, since the oracle assumption holds for all calibration points as well. Because $A$ is defined independently of the labels in $D_q$, conditioning on $X_{n+1}\in A$ does not disturb the exchangeability of the calibration scores and the test score conditional on $\widehat M$. Therefore, the split-conformal rank argument applies conditionally on $\{X_{n+1}\in A\}$:
\[
\Pr\!\left\{S_{n+1}\le \widehat q_{1-\alpha}\mid X_{n+1}\in A\right\}\ge 1-\alpha.
\]
Finally, by the definition of randomized APS,
\[
Y_{n+1}\in \widehat\Gamma(X_{n+1})
\quad \Longleftrightarrow \quad
S_{n+1}\le \widehat q_{1-\alpha},
\]
which proves the claim.
\end{proof}

\begin{corollary}[Piecewise-constant cluster oracle]
\label{cor:cluster-oracle}
Assume there exists a measurable partition
\[
\mathcal X = A_1\cup\cdots\cup A_K
\]
such that for each $k$ there is a probability vector $p_k\in\Delta^{C-1}$ satisfying
\[
\Pr(Y=c\mid X=x)=p_k(c), \qquad \forall x\in A_k,\ \forall c.
\]
Assume further that CFCP recovers this oracle structure in the sense that for every $x\in A_k$,
\[
q_x=p_k.
\]
Then the conclusion of Theorem~\ref{thm:oracle-local} holds on every cluster $A_k$:
\[
\Pr\!\left\{Y_{n+1}\in \widehat\Gamma(X_{n+1}) \mid X_{n+1}\in A_k\right\}\ge 1-\alpha,
\qquad k=1,\dots,K.
\]
\end{corollary}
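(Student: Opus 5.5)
The plan is to reduce Corollary~\ref{cor:cluster-oracle} to Theorem~\ref{thm:oracle-local} by checking that its hypotheses hold, and then to specialize its conclusion to the regions $A=A_k$. The first step checks \emph{oracle recovery}, assumption 1 of Theorem~\ref{thm:oracle-local}. Fix any $x\in\mathcal X$. Since $A_1,\dots,A_K$ partition $\mathcal X$, there is a unique $k$ with $x\in A_k$. The piecewise-constant hypothesis gives $\eta_c(x)=\Pr(Y=c\mid X=x)=p_k(c)$ for all $c$, and the recovery hypothesis gives $q_x=p_k$. Hence $q_x(c)=\eta_c(x)$ for all $x$ and $c$, which is exactly the global oracle condition. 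The set constructor is randomized APS with independent randomization and a fixed tie-breaking rule, and the disjoint-split setting of Theorem~\ref{thm:cfcp-marginal} is inherited. So all hypotheses of Theorem~\ref{thm:oracle-local} are satisfied.

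The second step applies Theorem~\ref{thm:oracle-local} with $A=A_k$ for each $k=1,\dots,K$. The only point to verify is that each $A_k$ is a measurable region \emph{defined without using labels from $D_q$}. Measurability is part of the assumption. For independence, I would state explicitly that the partition is a fixed population object, namely the oracle regions of constant conditional law, and is not estimated from $D_q$. If one instead wants $A_k$ to be the cells of the fitted clustering, those cells are functions of $\widehat M$, which by assumption 2 of Theorem~\ref{thm:cfcp-marginal} uses no labels from $D_q$. Either reading qualifies. We may also restrict to those $k$ with $\Pr(X\in A_k)>0$ so that the conditional probability is defined. The conclusion of Theorem~\ref{thm:oracle-local} is then exactly the claimed inequality on each $A_k$.

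The main obstacle is this interpretive point about what ``defined without using labels from $D_q$'' means for the partition. The rest is a direct substitution. A self-contained alternative avoids relying on the exchangeability phrasing of Theorem~\ref{thm:oracle-local}. By Lemma~\ref{lem:aps-uniform}, every calibration score and the test score are $\mathrm{Unif}(0,1)$ conditional on their covariates. Combined with i.i.d.\ sampling, this makes the test score independent of $X_{n+1}$ and of the calibration scores. Therefore conditioning on $\{X_{n+1}\in A_k\}$ leaves the joint law of the scores unchanged, and the usual rank bound $\Pr\{S_{n+1}\le\widehat q_{1-\alpha}\}\ge 1-\alpha$ carries over directly. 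I would include this as a remark backing the reduction.
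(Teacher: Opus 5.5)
Your proposal is correct and matches the paper's proof, which is the one-line application of Theorem~\ref{thm:oracle-local} with $A=A_k$. Your extra details fill in what the paper leaves implicit: the check that the partition hypothesis together with $q_x=p_k$ gives the global oracle condition $q_x(c)=\eta_c(x)$ for all $x$, and the remarks on $A_k$ being a fixed population object with $\Pr(X\in A_k)>0$. Your self-contained alternative is also sound and makes explicit the independence of $S_{n+1}$ from $X_{n+1}$ that the paper's proof of Theorem~\ref{thm:oracle-local} uses only informally, though note that it assumes i.i.d.\ sampling rather than the weaker exchangeability stated in Theorem~\ref{thm:cfcp-marginal}.
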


\begin{proof}
Apply Theorem~\ref{thm:oracle-local} with $A=A_k$.
\end{proof}

\paragraph{Remark: (Why this theorem is specific to randomized APS)}
\label{rem:local-limitation}
Theorem~\ref{thm:oracle-local} relies on the conditional uniformity in Lemma~\ref{lem:aps-uniform}. This property is natural for randomized APS, but it does not hold in general for the LAC score $1-q_x(y)$, deterministic APS, RAPS or SAPS. Therefore we do not claim exact local validity for the full CFCP score-family. Outside randomized APS, local validity should be understood through approximate score-homogenization statements.

\subsection{Approximate local validity}
\label{app:approx-local}
The next proposition formalizes the general intuition of CFCP: if the local score distribution within a region $A$ is close to the global score distribution used for calibration, then the local coverage error in $A$ is small.

\begin{proposition}[Approximate local validity from score homogenization]
\label{prop:approx-local}
Assume the setting of Theorem~\ref{thm:cfcp-marginal}, and let
\[
S=s(X,Y;U)
\]
denote the nonconformity score of a fresh point under any supported CFCP set constructor.
Let
\[
F(t)=\Pr(S\le t), \qquad
F_A(t)=\Pr(S\le t\mid X\in A)
\]
be the global and local score distribution functions, respectively, for a measurable region $A\subseteq\mathcal X$. Let
\[
q^*_{1-\alpha}=\inf\{t: F(t)\ge 1-\alpha\}.
\]
If
\[
\sup_t |F_A(t)-F(t)|\le \varepsilon_A
\]
for some $\varepsilon_A\ge 0$, then the population-thresholded prediction set
\[
\Gamma^*(x)=\{c: s(x,c;U)\le q^*_{1-\alpha}\}
\]
satisfies
\[
\Pr\!\left\{Y_{n+1}\in \Gamma^*(X_{n+1}) \mid X_{n+1}\in A\right\}
\ge 1-\alpha-\varepsilon_A.
\]
\end{proposition}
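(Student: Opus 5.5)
The plan is a direct reduction to the definition of the population quantile, followed by a single application of the uniform closeness hypothesis.

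First we fix the event of interest. The local model $\widehat M$ is fitted without labels from $D_q$ and is held fixed, and $U$ is the independent auxiliary randomization. Under these conditions, the event $Y_{n+1}\in\Gamma^*(X_{n+1})$ coincides with the event $S_{n+1}:=s(X_{n+1},Y_{n+1};U_{n+1})\le q^*_{1-\alpha}$. This holds because $\Gamma^*$ is defined by thresholding the same score at the fixed level $q^*_{1-\alpha}$, and the true label is included exactly when its own score passes the threshold. A non-empty-set fallback would only enlarge the set and preserve the inequality, as in the proof of Theorem~\ref{thm:cfcp-marginal}. The fresh point $(X_{n+1},Y_{n+1},U_{n+1})$ has the same law as $(X,Y,U)$. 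Hence
\[
\Pr\{Y_{n+1}\in\Gamma^*(X_{n+1})\mid X_{n+1}\in A\}=F_A(q^*_{1-\alpha}).
\]
Throughout we assume $\Pr(X\in A)>0$, so that the conditioning is well defined.

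Second, we show $F(q^*_{1-\alpha})\ge 1-\alpha$. This is the only step needing care, and it is where I expect the (mild) obstacle to lie. The set $\{t:F(t)\ge 1-\alpha\}$ is nonempty because $F(t)\to 1$ as $t\to\infty$, and it is bounded below because $F(t)\to 0$ as $t\to-\infty$ and $1-\alpha>0$. So the infimum $q^*_{1-\alpha}$ is finite. Take a sequence $t_n\downarrow q^*_{1-\alpha}$ with $F(t_n)\ge 1-\alpha$. Right-continuity of the CDF then gives $F(q^*_{1-\alpha})\ge 1-\alpha$. Note that the infimum is attained even though $F$ may have atoms from the discrete, deterministic scores.

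Third, we apply the hypothesis at the single point $t=q^*_{1-\alpha}$:
\[
F_A(q^*_{1-\alpha})\ge F(q^*_{1-\alpha})-\varepsilon_A\ge 1-\alpha-\varepsilon_A,
\]
which gives the claim. Only the one-sided inequality at $q^*_{1-\alpha}$ is used, not the full Kolmogorov distance, so the proof makes no use of exchangeability beyond fixing $\widehat M$.
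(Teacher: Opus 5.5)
Your proposal is correct and matches the paper's proof step for step: you identify $\{Y_{n+1}\in\Gamma^*(X_{n+1})\}$ with $\{S_{n+1}\le q^*_{1-\alpha}\}$, use $F(q^*_{1-\alpha})\ge 1-\alpha$, and apply the uniform bound at that single point. Your right-continuity argument for $F(q^*_{1-\alpha})\ge 1-\alpha$ and your remark that $\Pr(X\in A)>0$ is needed spell out details the paper takes ``by definition'' or leaves implicit.
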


\begin{proof}
By definition of $q^*_{1-\alpha}$,
\[
F(q^*_{1-\alpha})\ge 1-\alpha.
\]
Therefore
\[
F_A(q^*_{1-\alpha})
\ge
F(q^*_{1-\alpha})-\sup_t |F_A(t)-F(t)|
\ge
1-\alpha-\varepsilon_A.
\]
Since
\[
\{Y_{n+1}\in \Gamma^*(X_{n+1})\}
=
\{S_{n+1}\le q^*_{1-\alpha}\},
\]
the result follows.
\end{proof}

\paragraph{Remark (Connection to the finite-sample CFCP procedure)}
\label{rem:empirical-approx}
Proposition~\ref{prop:approx-local} is stated at the population quantile $q^*_{1-\alpha}$ to isolate the local-homogenization effect. In the actual split-conformal procedure, $q^*_{1-\alpha}$ is replaced by the empirical order-statistic threshold $\widehat q_{1-\alpha}$. The resulting set inherits the same qualitative conclusion, up to the usual finite-sample rank-calibration conservatism already captured by Theorem~\ref{thm:cfcp-marginal}.

\section{Experiments and results}
\subsection{Experimental setup}
\label{app:experimental_setup}

\subsubsection{Data sets and models}
\begin{itemize}
\item \textbf{CIFAR-100} (\cite{Krizhevsky09learningmultiple}): we followed the training procedure from \cite{ding2023manyclasses}.
\begin{itemize}
    \item Splits: The datasets include 50,000 train images and 10,000 test images equally distributed over 10/100 classes. We split the training data to 60\% and 40\% validation. The test data was split to five repeated holdouts with fixed seeds to 75\% calibration, and 25\% testing.
    \item Model: PyTorch (\cite{paszke2019pytorch}) ResNet-50 (\cite{he2016deep}),  initialized to the IMAGENET1K\_V2 (\cite{deng2009imagenet}) weights and then fine-tuned all parameters for 30 epochs. Validation accuracy is 60\%.
\end{itemize}
\item \textbf{ImageNet} (\cite{deng2009imagenet}): we followed the training procedure from \cite{ding2023manyclasses}. 
\begin{itemize}
    \item Splits: We sampled random 100,000 images of the $\approx$1M training set for training and another five repeated holdouts with fixed seeds of 100,000 images for conformal prediction calibration and validation. We used the 50,000 ImageNet's validation set for testing. In addition, we used ImageNet-V2's test set using the classification and conformal prediction models of ImageNet's experiment.
    \item Model: Based on a SimCLR-v2 model (\cite{chen2020big}), which was trained on the ImageNet training set without labels, to extract feature vectors of length 6,144 for all images in the ImageNet training set, we trained a classifier using one fully connected prediction head. Validation accuracy is 78\%.
\end{itemize}
\item \textbf{WOS-46985}: We used the keywords and abstract texts to train a classification model on the dataset.
\begin{itemize}
    \item Splits: 70\% of the data were randomly sampled for training. The remaining 30\% were split to five repeated holdouts with fixed seeds to 65\% conformal prediction calibration and 35\% testing.
    \item Model: We used DistilBERT (\cite{sanh2019distilbert}) classifier fine-tuned with num\_train\_epochs=5, learning\_rate=2e-5, weight\_decay=0.01, warmup\_ratio=0.1. Validation accuracy is 88.4\%.
\end{itemize}
\item \textbf{Baseline methods}: We compared our CFCP method to Split, ICP, CCP, NCP and RC3P by evaluating each on these score methods: LAC, APS, RAPS and SAPS. For the Split, ICP and CCP, baseline methods and all scores implementation we used the TorchCP framework (\cite{huang2024torchcp}). For the NCP and RC3P baselines we repeated the implementation that was shared by the authors in the original papers.
    \item \textbf{Method parameters}: 
    \begin{itemize}
        \item Score: We used equal parameters for scoring on both CFCP and baseline methods. APS/RAPS/SAPS: randomized=True, RAPS: penalty=0.1, kreg=5 for CIFAR-100 and WOS-46985, 20 for ImageNet, SAPS: weight=0.2.
        \item Prediction set construction: We used temperature=1 for all methods.
        \item CCP: ratio\_clustering="auto", num\_clusters="auto".
        \item NCP:  localizer tuned using "logits" from the authors' implementation and "features" as mentioned in the original paper (we used CNN activations and Transformer hidden states), use\_platt\_scaling=True, k tuned using grid \{0.05, 0.10, 0.20, 0.40\}, lambda\_L tuned using grid \{10, 50, 100, 500, 1000, 5000\}, hyper\_fraction=0.3.
        \item RC3P: variant = 'mix', mix\_grid\_size=50, hyper\_fraction=0, coverage\_slack=0.
        \item CFCP locality-specific hyperparameters were tuned per dataset: n\_clusters ($K$), n\_neighbors ($m$), cluster assignment $\tau$, Dirichlet smoothing ($\beta$), reliability gamma ($\gamma$), reliability beta support ($\beta_{\mathrm{sup}}$).  These hyperparameters were selected on an internal tuning split from the calibration pool. 
\end{itemize}
\item \textbf{Calibration sub-splits}: We tuned locality-specific hyperparameters per dataset: \(n_{\text{clusters}}(K)\), \(n_{\text{neighbors}}(m)\), cluster-assignment temperature \(\tau\), Dirichlet smoothing \(\beta\), reliability gamma \(\gamma\), and reliability beta support \(\beta_{\text{sup}}\). For each outer data split, we partitioned the non-test conformal pool into \(D_{\text{stat}}, D_{\text{tune}}, D_q\) with proportions (0.6/0.2/0.2). \(D_{\text{stat}}\) was used to estimate the local cluster-frequency model, \(D_{\text{tune}}\) was used only for hyperparameter selection via an internal split into quantile-calibration and evaluation subsets, and \(D_q\) was reserved exclusively for the final conformal quantile. After hyperparameter selection, we refit the CFCP local model on \(D_{\text{stat}} \cup D_{\text{tune}}\) and calibrated the final conformal threshold on the untouched \(D_q\). Baseline methods were calibrated on the same calibration pool and all methods were evaluated on the same untouched test set.
\end{itemize}

\subsubsection{Compute}
All experiments were performed on a Macbook Pro M4 Max 36GB RAM. A full run of all reported CFCP experiments required approximately 3-4 hours. Individual dataset runs required approximately 30-60 minutes depending on the dataset.

\subsection{Additional results}
\label{app:results}
\begin{longtable}{@{} l l r r r r @{}}
\caption{Full results.  Each metric column reports mean\,$\pm$\,95\,\%\,CI over five repeated splits.  \textbf{Bold} marks the best class coverage within each (Dataset / Score) group.  }\label{tab:full_results}\\
\toprule
\textbf{Score} & \textbf{Method} & \textbf{Class Cov.\,($\uparrow$)} & \textbf{Set Size\,($\downarrow$)} & \textbf{WUC\,($\downarrow$)} & \textbf{MaxCE\,($\downarrow$)} \\
\multicolumn{2}{@{}l}{\textit{(metrics: mean\,$\pm$\,95\,\%\,CI)}} & & & & \\
\midrule
\endfirsthead
\toprule
\textbf{Score} & \textbf{Method} & \textbf{Class Cov.\,($\uparrow$)} & \textbf{Set Size\,($\downarrow$)} & \textbf{WUC\,($\downarrow$)} & \textbf{MaxCE\,($\downarrow$)} \\
\multicolumn{2}{@{}l}{\textit{(metrics: mean\,$\pm$\,95\,\%\,CI)}} & & & & \\
\midrule
\endhead
\midrule
\multicolumn{6}{r@{}}{\footnotesize\textit{(continued on next page)}}\\
\endfoot
\bottomrule
\endlastfoot
\multicolumn{6}{@{}l@{}}{\cellcolor{gray!15}\small \textbf{CIFAR-100}} \\
\addlinespace[1pt]
\midrule\multirow{6}{*}{\textit{LAC}} & CFCP & $\mathbf{0.592 \pm 0.057}$ & $7.9 \pm 0.4$ & $0.030 \pm 0.009$ & $0.196 \pm 0.027$ \\
 & RC3P & $0.568 \pm 0.042$ & $11.2 \pm 0.5$ & $0.026 \pm 0.005$ & $0.164 \pm 0.027$ \\
 & ICP & $0.568 \pm 0.042$ & $11.2 \pm 0.5$ & $0.026 \pm 0.005$ & $0.164 \pm 0.027$ \\
 & Split & $0.516 \pm 0.034$ & $8.0 \pm 0.2$ & $0.033 \pm 0.006$ & $0.228 \pm 0.042$ \\
 & NCP & $0.566 \pm 0.056$ & $8.9 \pm 0.5$ & $0.023 \pm 0.005$ & $0.164 \pm 0.044$ \\
 & CCP & $0.534 \pm 0.069$ & $9.7 \pm 0.5$ & $0.027 \pm 0.007$ & $0.204 \pm 0.027$ \\
\addlinespace[2pt]
\midrule\multirow{6}{*}{\textit{APS}} & CFCP & $\mathbf{0.590 \pm 0.039}$ & $8.3 \pm 0.5$ & $0.025 \pm 0.009$ & $0.220 \pm 0.070$ \\
 & RC3P & $0.582 \pm 0.032$ & $11.3 \pm 0.1$ & $0.025 \pm 0.002$ & $0.196 \pm 0.057$ \\
 & ICP & $0.560 \pm 0.046$ & $14.6 \pm 0.6$ & $0.026 \pm 0.005$ & $0.188 \pm 0.022$ \\
 & Split & $0.516 \pm 0.045$ & $11.1 \pm 0.3$ & $0.032 \pm 0.006$ & $0.212 \pm 0.022$ \\
 & NCP & $0.572 \pm 0.039$ & $12.0 \pm 0.5$ & $0.024 \pm 0.005$ & $0.172 \pm 0.022$ \\
 & CCP & $0.530 \pm 0.066$ & $12.2 \pm 1.0$ & $0.029 \pm 0.009$ & $0.188 \pm 0.022$ \\
\addlinespace[2pt]
\midrule\multirow{6}{*}{\textit{RAPS}} & CFCP & $\mathbf{0.598 \pm 0.034}$ & $9.1 \pm 0.6$ & $0.031 \pm 0.007$ & $0.212 \pm 0.042$ \\
 & RC3P & $0.564 \pm 0.037$ & $11.6 \pm 0.2$ & $0.025 \pm 0.003$ & $0.172 \pm 0.042$ \\
 & ICP & $0.570 \pm 0.041$ & $11.6 \pm 0.2$ & $0.025 \pm 0.002$ & $0.172 \pm 0.042$ \\
 & Split & $0.556 \pm 0.034$ & $8.3 \pm 0.1$ & $0.036 \pm 0.004$ & $0.252 \pm 0.074$ \\
 & NCP & $0.562 \pm 0.054$ & $7.9 \pm 0.8$ & $0.027 \pm 0.008$ & $0.204 \pm 0.075$ \\
 & CCP & $0.580 \pm 0.044$ & $10.6 \pm 0.8$ & $0.027 \pm 0.005$ & $0.196 \pm 0.027$ \\
\addlinespace[2pt]
\midrule\multirow{6}{*}{\textit{SAPS}} & CFCP & $\mathbf{0.586 \pm 0.031}$ & $8.8 \pm 0.5$ & $0.033 \pm 0.008$ & $0.212 \pm 0.042$ \\
 & RC3P & $0.548 \pm 0.048$ & $10.9 \pm 0.2$ & $0.027 \pm 0.003$ & $0.188 \pm 0.042$ \\
 & ICP & $0.582 \pm 0.051$ & $11.5 \pm 0.2$ & $0.024 \pm 0.003$ & $0.188 \pm 0.042$ \\
 & Split & $0.558 \pm 0.024$ & $8.2 \pm 0.1$ & $0.036 \pm 0.005$ & $0.244 \pm 0.057$ \\
 & NCP & $0.566 \pm 0.078$ & $7.7 \pm 0.8$ & $0.027 \pm 0.011$ & $0.196 \pm 0.057$ \\
 & CCP & $0.568 \pm 0.051$ & $10.3 \pm 0.5$ & $0.028 \pm 0.002$ & $0.204 \pm 0.027$ \\
\addlinespace[4pt]
\multicolumn{6}{@{}l@{}}{\cellcolor{gray!15}\small \textbf{ImageNet}} \\
\addlinespace[1pt]
\midrule\multirow{6}{*}{\textit{LAC}} & CFCP & $\mathbf{0.508 \pm 0.004}$ & $1.9 \pm 0.0$ & $0.058 \pm 0.001$ & $0.520 \pm 0.000$ \\
 & RC3P & $0.425 \pm 0.009$ & $3.5 \pm 0.1$ & $0.049 \pm 0.001$ & $0.516 \pm 0.122$ \\
 & ICP & $0.493 \pm 0.008$ & $3.9 \pm 0.1$ & $0.038 \pm 0.002$ & $0.484 \pm 0.123$ \\
 & Split & $0.499 \pm 0.005$ & $1.8 \pm 0.0$ & $0.060 \pm 0.001$ & $0.540 \pm 0.000$ \\
 & NCP & $0.494 \pm 0.006$ & $1.9 \pm 0.0$ & $0.063 \pm 0.002$ & $0.500 \pm 0.000$ \\
 & CCP & $0.494 \pm 0.026$ & $3.3 \pm 0.3$ & $0.040 \pm 0.003$ & $0.556 \pm 0.130$ \\
\addlinespace[2pt]
\midrule\multirow{6}{*}{\textit{APS}} & CFCP & $\mathbf{0.672 \pm 0.002}$ & $15.1 \pm 0.2$ & $0.030 \pm 0.000$ & $0.380 \pm 0.000$ \\
 & RC3P & $0.654 \pm 0.008$ & $8.4 \pm 0.5$ & $0.023 \pm 0.001$ & $0.424 \pm 0.062$ \\
 & ICP & $0.658 \pm 0.026$ & $42.8 \pm 0.4$ & $0.019 \pm 0.001$ & $0.240 \pm 0.053$ \\
 & Split & $0.655 \pm 0.008$ & $35.2 \pm 0.3$ & $0.023 \pm 0.001$ & $0.392 \pm 0.028$ \\
 & NCP & $0.667 \pm 0.009$ & $15.2 \pm 0.4$ & $0.033 \pm 0.001$ & $0.448 \pm 0.014$ \\
 & CCP & $0.646 \pm 0.010$ & $38.0 \pm 0.9$ & $0.020 \pm 0.001$ & $0.276 \pm 0.027$ \\
\addlinespace[2pt]
\midrule\multirow{6}{*}{\textit{RAPS}} & CFCP & $\mathbf{0.589 \pm 0.002}$ & $3.6 \pm 0.0$ & $0.042 \pm 0.000$ & $0.440 \pm 0.000$ \\
 & RC3P & $0.546 \pm 0.008$ & $5.9 \pm 0.3$ & $0.033 \pm 0.001$ & $0.444 \pm 0.044$ \\
 & ICP & $0.546 \pm 0.007$ & $5.5 \pm 0.3$ & $0.030 \pm 0.001$ & $0.352 \pm 0.076$ \\
 & Split & $0.553 \pm 0.017$ & $3.3 \pm 0.0$ & $0.039 \pm 0.001$ & $0.404 \pm 0.021$ \\
 & NCP & $0.565 \pm 0.005$ & $2.8 \pm 0.0$ & $0.047 \pm 0.000$ & $0.500 \pm 0.030$ \\
 & CCP & $0.536 \pm 0.035$ & $8.5 \pm 2.4$ & $0.034 \pm 0.003$ & $0.400 \pm 0.058$ \\
\addlinespace[2pt]
\midrule\multirow{6}{*}{\textit{SAPS}} & CFCP & $\mathbf{0.571 \pm 0.004}$ & $2.6 \pm 0.0$ & $0.046 \pm 0.001$ & $0.480 \pm 0.000$ \\
 & RC3P & $0.556 \pm 0.010$ & $6.0 \pm 0.3$ & $0.032 \pm 0.001$ & $0.444 \pm 0.044$ \\
 & ICP & $0.533 \pm 0.021$ & $5.1 \pm 0.2$ & $0.031 \pm 0.001$ & $0.440 \pm 0.053$ \\
 & Split & $0.529 \pm 0.007$ & $2.2 \pm 0.0$ & $0.055 \pm 0.000$ & $0.476 \pm 0.032$ \\
 & NCP & $0.559 \pm 0.007$ & $2.4 \pm 0.0$ & $0.049 \pm 0.001$ & $0.448 \pm 0.014$ \\
 & CCP & $0.549 \pm 0.020$ & $6.8 \pm 0.9$ & $0.033 \pm 0.001$ & $0.436 \pm 0.037$ \\
\addlinespace[4pt]
\multicolumn{6}{@{}l@{}}{\cellcolor{gray!15}\small \textbf{ImageNet-V2}} \\
\addlinespace[1pt]
\midrule\multirow{6}{*}{\textit{LAC}} & CFCP & $0.446 \pm 0.002$ & $2.5 \pm 0.0$ & $0.128 \pm 0.001$ & $0.900 \pm 0.000$ \\
 & RC3P & $0.424 \pm 0.014$ & $6.9 \pm 0.5$ & $0.125 \pm 0.003$ & $0.720 \pm 0.056$ \\
 & ICP & $0.388 \pm 0.008$ & $5.0 \pm 0.1$ & $0.132 \pm 0.002$ & $0.740 \pm 0.068$ \\
 & Split & $\mathbf{0.452 \pm 0.002}$ & $2.4 \pm 0.0$ & $0.128 \pm 0.001$ & $0.900 \pm 0.000$ \\
 & NCP & $0.445 \pm 0.009$ & $2.8 \pm 0.1$ & $0.130 \pm 0.002$ & $0.800 \pm 0.000$ \\
 & CCP & $0.386 \pm 0.019$ & $4.3 \pm 0.2$ & $0.133 \pm 0.006$ & $0.800 \pm 0.124$ \\
\addlinespace[2pt]
\midrule\multirow{6}{*}{\textit{APS}} & CFCP & $\mathbf{0.714 \pm 0.002}$ & $31.2 \pm 0.4$ & $0.046 \pm 0.000$ & $0.500 \pm 0.000$ \\
 & RC3P & $0.712 \pm 0.004$ & $28.1 \pm 1.1$ & $0.047 \pm 0.001$ & $0.560 \pm 0.068$ \\
 & ICP & $0.708 \pm 0.013$ & $72.0 \pm 0.5$ & $0.042 \pm 0.002$ & $0.440 \pm 0.068$ \\
 & Split & $0.710 \pm 0.011$ & $64.7 \pm 0.3$ & $0.043 \pm 0.001$ & $0.520 \pm 0.056$ \\
 & NCP & $0.704 \pm 0.008$ & $39.8 \pm 1.1$ & $0.048 \pm 0.002$ & $0.600 \pm 0.000$ \\
 & CCP & $0.707 \pm 0.019$ & $69.2 \pm 4.1$ & $0.042 \pm 0.003$ & $0.460 \pm 0.068$ \\
\addlinespace[2pt]
\midrule\multirow{6}{*}{\textit{RAPS}} & CFCP & $\mathbf{0.552 \pm 0.004}$ & $4.8 \pm 0.0$ & $0.092 \pm 0.001$ & $0.700 \pm 0.000$ \\
 & RC3P & $0.544 \pm 0.012$ & $10.2 \pm 0.5$ & $0.087 \pm 0.002$ & $0.720 \pm 0.056$ \\
 & ICP & $0.523 \pm 0.007$ & $6.4 \pm 0.1$ & $0.088 \pm 0.001$ & $0.680 \pm 0.056$ \\
 & Split & $0.525 \pm 0.007$ & $4.3 \pm 0.0$ & $0.095 \pm 0.001$ & $0.700 \pm 0.000$ \\
 & NCP & $0.541 \pm 0.006$ & $4.2 \pm 0.0$ & $0.094 \pm 0.001$ & $0.700 \pm 0.000$ \\
 & CCP & $0.529 \pm 0.012$ & $8.8 \pm 1.1$ & $0.087 \pm 0.001$ & $0.680 \pm 0.056$ \\
\addlinespace[2pt]
\midrule\multirow{6}{*}{\textit{SAPS}} & CFCP & $\mathbf{0.465 \pm 0.002}$ & $3.0 \pm 0.0$ & $0.122 \pm 0.001$ & $0.900 \pm 0.000$ \\
 & RC3P & $0.440 \pm 0.008$ & $6.0 \pm 0.3$ & $0.116 \pm 0.002$ & $0.740 \pm 0.068$ \\
 & ICP & $0.463 \pm 0.010$ & $5.5 \pm 0.2$ & $0.104 \pm 0.002$ & $0.720 \pm 0.056$ \\
 & Split & $0.458 \pm 0.003$ & $2.8 \pm 0.0$ & $0.123 \pm 0.001$ & $0.820 \pm 0.056$ \\
 & NCP & $0.446 \pm 0.004$ & $2.8 \pm 0.0$ & $0.125 \pm 0.001$ & $0.800 \pm 0.000$ \\
 & CCP & $0.455 \pm 0.006$ & $7.4 \pm 1.5$ & $0.108 \pm 0.003$ & $0.740 \pm 0.068$ \\
\addlinespace[4pt]
\multicolumn{6}{@{}l@{}}{\cellcolor{gray!15}\small \textbf{WOS-46985}} \\
\addlinespace[1pt]
\midrule\multirow{6}{*}{\textit{LAC}} & CFCP & $\mathbf{0.645 \pm 0.031}$ & $2.5 \pm 0.3$ & $0.029 \pm 0.005$ & $0.710 \pm 0.135$ \\
 & RC3P & $0.634 \pm 0.054$ & $3.8 \pm 0.6$ & $0.020 \pm 0.002$ & $0.312 \pm 0.082$ \\
 & ICP & $0.621 \pm 0.064$ & $4.1 \pm 0.4$ & $0.021 \pm 0.002$ & $0.312 \pm 0.082$ \\
 & Split & $0.622 \pm 0.017$ & $1.2 \pm 0.0$ & $0.035 \pm 0.003$ & $0.510 \pm 0.188$ \\
 & NCP & $0.619 \pm 0.031$ & $1.3 \pm 0.1$ & $0.027 \pm 0.003$ & $0.810 \pm 0.155$ \\
 & CCP & $0.601 \pm 0.075$ & $5.4 \pm 1.1$ & $0.027 \pm 0.002$ & $0.324 \pm 0.076$ \\
\addlinespace[2pt]
\midrule\multirow{6}{*}{\textit{APS}} & CFCP & $\mathbf{0.633 \pm 0.018}$ & $1.5 \pm 0.0$ & $0.038 \pm 0.003$ & $0.837 \pm 0.107$ \\
 & RC3P & $0.587 \pm 0.052$ & $3.8 \pm 0.5$ & $0.023 \pm 0.002$ & $0.262 \pm 0.113$ \\
 & ICP & $0.606 \pm 0.053$ & $6.2 \pm 0.5$ & $0.020 \pm 0.004$ & $0.278 \pm 0.093$ \\
 & Split & $0.546 \pm 0.034$ & $3.4 \pm 0.2$ & $0.023 \pm 0.004$ & $0.300 \pm 0.113$ \\
 & NCP & $0.609 \pm 0.020$ & $1.5 \pm 0.1$ & $0.036 \pm 0.001$ & $0.475 \pm 0.132$ \\
 & CCP & $0.570 \pm 0.086$ & $6.6 \pm 0.6$ & $0.022 \pm 0.004$ & $0.240 \pm 0.042$ \\
\addlinespace[2pt]
\midrule\multirow{6}{*}{\textit{RAPS}} & CFCP & $\mathbf{0.633 \pm 0.018}$ & $1.5 \pm 0.0$ & $0.038 \pm 0.003$ & $0.837 \pm 0.107$ \\
 & RC3P & $0.579 \pm 0.024$ & $4.0 \pm 0.4$ & $0.020 \pm 0.002$ & $0.261 \pm 0.117$ \\
 & ICP & $0.609 \pm 0.051$ & $5.9 \pm 0.7$ & $0.020 \pm 0.004$ & $0.243 \pm 0.117$ \\
 & Split & $0.560 \pm 0.057$ & $2.8 \pm 0.1$ & $0.021 \pm 0.005$ & $0.300 \pm 0.113$ \\
 & NCP & $0.600 \pm 0.019$ & $1.5 \pm 0.0$ & $0.036 \pm 0.002$ & $0.503 \pm 0.134$ \\
 & CCP & $0.587 \pm 0.047$ & $6.0 \pm 0.6$ & $0.021 \pm 0.005$ & $0.205 \pm 0.030$ \\
\addlinespace[2pt]
\midrule\multirow{6}{*}{\textit{SAPS}} & CFCP & $\mathbf{0.643 \pm 0.025}$ & $2.3 \pm 0.1$ & $0.048 \pm 0.004$ & $0.827 \pm 0.126$ \\
 & RC3P & $0.604 \pm 0.051$ & $3.7 \pm 0.3$ & $0.021 \pm 0.003$ & $0.270 \pm 0.101$ \\
 & ICP & $0.610 \pm 0.051$ & $5.8 \pm 0.6$ & $0.021 \pm 0.003$ & $0.257 \pm 0.046$ \\
 & Split & $0.590 \pm 0.046$ & $1.3 \pm 0.0$ & $0.029 \pm 0.002$ & $0.420 \pm 0.056$ \\
 & NCP & $0.630 \pm 0.037$ & $1.3 \pm 0.0$ & $0.029 \pm 0.001$ & $0.420 \pm 0.056$ \\
 & CCP & $0.604 \pm 0.049$ & $5.1 \pm 0.8$ & $0.027 \pm 0.002$ & $0.282 \pm 0.068$ \\
\end{longtable}

\begin{table}[t]
\centering
\caption{Within-family comparison of CFCP against baselines at $\alpha=0.1$. Values are means over five repeated dataset splits. \textbf{Bold} marks the best under coverage error within each family. Full results including 95\% CIs are in Appendix~\ref{app:results}. }
\label{tab:main_results_err}
\small
\setlength{\tabcolsep}{4pt}
\resizebox{\linewidth}{!}{%
\begin{tabular}{l l cc cc cc cc cc cc}
\toprule
 &  & \multicolumn{2}{c}{\textbf{CFCP (ours)}} & \multicolumn{2}{c}{Split} & \multicolumn{2}{c}{ICP} & \multicolumn{2}{c}{CCP} & \multicolumn{2}{c}{NCP} & \multicolumn{2}{c}{RC3P} \\
\cmidrule(lr){3-4}\cmidrule(lr){5-6}\cmidrule(lr){7-8}\cmidrule(lr){9-10}\cmidrule(lr){11-12}\cmidrule(lr){13-14}
\textbf{Dataset} & \textbf{Score} & WUC$\downarrow$ & MaxCE$\downarrow$ & WUC$\downarrow$ & MaxCE$\downarrow$ & WUC$\downarrow$ & MaxCE$\downarrow$ & WUC$\downarrow$ & MaxCE$\downarrow$ & WUC$\downarrow$ & MaxCE$\downarrow$ & WUC$\downarrow$ & MaxCE$\downarrow$ \\
\midrule
\multirow{4}{*}{CIFAR-100} & LAC & 0.030 & 0.196 & 0.033 & 0.228 & 0.026 & 0.164 & 0.027 & 0.204 & \textbf{0.023} & 0.164 & 0.026 & 0.164 \\
 & APS & 0.025 & 0.220 & 0.032 & 0.212 & 0.026 & 0.188 & 0.029 & 0.188 & \textbf{0.024} & 0.172 & 0.025 & 0.196 \\
 & RAPS & 0.031 & 0.212 & 0.036 & 0.252 & \textbf{0.025} & 0.172 & 0.027 & 0.196 & 0.027 & 0.204 & \textbf{0.025} & 0.172 \\
 & SAPS & 0.033 & 0.212 & 0.036 & 0.244 & \textbf{0.024} & 0.188 & 0.028 & 0.204 & 0.027 & 0.196 & 0.027 & 0.188 \\
\midrule
\multirow{4}{*}{ImageNet} & LAC & 0.058 & 0.520 & 0.060 & 0.540 & \textbf{0.038} & 0.484 & 0.040 & 0.556 & 0.063 & 0.500 & 0.049 & 0.516 \\
 & APS & 0.030 & 0.380 & 0.023 & 0.392 & \textbf{0.019} & 0.240 & 0.020 & 0.276 & 0.033 & 0.448 & 0.023 & 0.424 \\
 & RAPS & 0.042 & 0.440 & 0.039 & 0.404 & \textbf{0.030} & 0.352 & 0.034 & 0.400 & 0.047 & 0.500 & 0.033 & 0.444 \\
 & SAPS & 0.046 & 0.480 & 0.055 & 0.476 & \textbf{0.031} & 0.440 & 0.033 & 0.436 & 0.049 & 0.448 & 0.032 & 0.444 \\
\midrule
\multirow{4}{*}{ImageNet-V2} & LAC & 0.128 & 0.900 & 0.128 & 0.900 & 0.132 & 0.740 & 0.133 & 0.800 & 0.130 & 0.800 & \textbf{0.125} & 0.720 \\
 & APS & 0.046 & 0.500 & 0.043 & 0.520 & \textbf{0.042} & 0.440 & \textbf{0.042} & 0.460 & 0.048 & 0.600 & 0.047 & 0.560 \\
 & RAPS & 0.092 & 0.700 & 0.095 & 0.700 & 0.088 & 0.680 & \textbf{0.087} & 0.680 & 0.094 & 0.700 & \textbf{0.087} & 0.720 \\
 & SAPS & 0.122 & 0.900 & 0.123 & 0.820 & \textbf{0.104} & 0.720 & 0.108 & 0.740 & 0.125 & 0.800 & 0.116 & 0.740 \\
\midrule
\multirow{4}{*}{WOS-46985} & LAC & 0.029 & 0.710 & 0.035 & 0.510 & 0.021 & 0.312 & 0.027 & 0.324 & 0.027 & 0.810 & \textbf{0.020} & 0.312 \\
 & APS & 0.038 & 0.837 & 0.023 & 0.300 & \textbf{0.020} & 0.278 & 0.022 & 0.240 & 0.036 & 0.475 & 0.023 & 0.262 \\
 & RAPS & 0.038 & 0.837 & 0.021 & 0.300 & \textbf{0.020} & 0.243 & 0.021 & 0.205 & 0.036 & 0.503 & \textbf{0.020} & 0.261 \\
 & SAPS & 0.048 & 0.827 & 0.029 & 0.420 & \textbf{0.021} & 0.257 & 0.027 & 0.282 & 0.029 & 0.420 & \textbf{0.021} & 0.270 \\
\bottomrule
\end{tabular}%
}
\end{table}

\begin{table}
\centering
\caption{Example CFCP hyperparameter grid search used in the sensitivity analysis. The table reports top 20 APS configurations across locality-specific hyperparameters on the CIFAR-100 dataset: number of clusters $K$, number of neighboring centroids $m$, assignment temperature $\tau$, Dirichlet smoothing $\beta$, reliability exponent $\gamma$, and support shrinkage $\beta_{\mathrm{sup}}$. Displayed are our main metrics: missed class coverage (Misses) and average set size, as well as the secondary diagnostics of weighted under-coverage (WUC) and MaxCE. Lower is better for all metrics. The near-best configurations achieve similar class coverage and set size across different hyperparameter choices, illustrating that CFCP is task-adaptive but not brittle to a single finely tuned setting.}
\label{tab:hpo_grid}
\begin{tabular}{r r r r r r lr r r }\toprule
 $K$ & $m$ & $\tau$ & $\beta$ & $\gamma$ & $\beta_{sup}$ &  Misses&Size & WUC & MaxCE \\\midrule
 120 & 3 & 0.08 & 2 & 2 & 150 &  45.4 &8.368 & 0.028 & 0.204 \\
 120 & 3 & 0.08 & 2 & 1 & 150 &  46.6 &7.851 & 0.028 & 0.188 \\
 120 & 3 & 0.12 & 2 & 2 & 150 &  45.4 &8.330 & 0.028 & 0.188 \\
 120 & 3 & 0.08 & 8 & 2 & 150 &  43.4 &9.713 & 0.028 & 0.196 \\
 120 & 3 & 0.12 & 2 & 1 & 150 &  45.4 &7.614 & 0.029 & 0.188 \\
 120 & 3 & 0.08 & 8 & 1 & 40 &  45.6 &8.927 & 0.029 & 0.180 \\
 120 & 3 & 0.08 & 2 & 1 & 40 &  46.6 &8.451 & 0.029 & 0.196 \\
 120 & 20 & 0.08 & 2 & 1 & 40 &  44.8 &8.566 & 0.029 & 0.236 \\
 120 & 3 & 0.08 & 8 & 2 & 40 &  45.0 &11.973 & 0.029 & 0.180 \\
 120 & 3 & 0.08 & 8 & 1 & 150 &  46.4 &8.041 & 0.029 & 0.180 \\
 120 & 3 & 0.12 & 8 & 2 & 150 &  46.8 &10.038 & 0.029 & 0.212 \\
 120 & 3 & 0.08 & 2 & 2 & 40 &  45.6 &9.249 & 0.030 & 0.172 \\
 120 & 20 & 0.08 & 2 & 1 & 150 &  46.2 &7.758 & 0.030 & 0.220 \\
 120 & 3 & 0.12 & 8 & 1 & 40 &  44.8 &8.696 & 0.030 & 0.188 \\
 80 & 3 & 0.08 & 2 & 2 & 0 &  43.6 &10.778 & 0.030 & 0.212 \\
 120 & 10 & 0.08 & 2 & 1 & 150 &  46.6 &7.498 & 0.030 & 0.204 \\
 120 & 3 & 0.12 & 8 & 1 & 0 &  43.8 &10.361 & 0.030 & 0.204 \\
 80 & 3 & 0.08 & 2 & 1 & 40 &  44.6 &8.876 & 0.030 & 0.196 \\ \bottomrule

\end{tabular}

\end{table}

\begin{table}
\centering
\caption{
ImageNet and ImageNet-V2 ablation of CFCP locality. 
We compare the full CFCP variant using learned clusters and soft assignment ($K=180$, $m=50$), a hard-assignment variant using only the nearest cluster ($K=180$, $m=1$), and a no-cluster variant that uses only the global label-frequency model with shrinkage ($K=1$, $m=1$). 
Coverage misses denotes the mean number of classes whose empirical class-conditional coverage falls below the target level. Lower is better. 
Size denotes the mean prediction-set size. Lower is better. 
The results show that soft cluster assignment gives the strongest coverage-efficiency tradeoff, while removing clusters produces very large prediction sets.}
\label{tab:ablation}
\setlength{\tabcolsep}{20pt}
% \resizebox{200pt}{!}{%
\begin{tabular}{l l r r}\toprule
Ablation & Score & Misses& Set Size\\\midrule
\multicolumn{4}{@{}c@{}}{\cellcolor{gray!15}\small \textbf{ImageNet}} \\
\textbf{Full CFCP}& APS & 328 & 15.1 \\
& LAC & 454.4 & 2.1 \\
& RAPS & 326.6 & 4.5 \\
& SAPS & 430.6 & 2.6 \\
\midrule
\textbf{Hard assignment}& APS & 439.2 & 29.6 \\
& LAC & 430.8 & 15.5 \\
& RAPS & 442.6 & 15.7 \\
& SAPS & 437.2 & 15.8 \\
\midrule
\textbf{No clusters / Global}& APS & 397.2 & 340.0 \\
& LAC & 396.8 & 340.3 \\
& RAPS & 397.2 & 340.0 \\
& SAPS & 398 & 340.1 \\
\multicolumn{4}{@{}c@{}}{\cellcolor{gray!15}\small \textbf{ImageNet-V2}} \\
\textbf{Full CFCP}&  APS & 286.4 & 31.2 \\
& LAC & 556.2 & 2.4 \\
& RAPS & 433 & 5.1 \\
& SAPS & 546.2 & 2.9 \\
\midrule
\textbf{Hard assignment}& APS & 543.4 & 30.8 \\
& LAC & 545.8 & 15.6 \\
& RAPS & 557 & 15.8 \\
& SAPS & 552 & 15.9 \\
\midrule
\textbf{No clusters / Global}& APS & 446.6 & 340.1 \\
& LAC & 447.2 & 340.4 \\
& RAPS & 446.6 & 340.1 \\
& SAPS & 446.2 & 340.1 \\ \bottomrule

\end{tabular}
% }
\end{table}

\subsection{Failure-mode case study on CIFAR-100}
\label{app:failure}

To better understand residual tail failures, we compare CFCP against the closest local-neighborhood baseline, NCP, on CIFAR-100 using APS and RAPS. For each score family, we identify the worst-covered CFCP classes and report their empirical class-conditional coverage, coverage deficit relative to the nominal target $1-\alpha=0.9$, class frequency and average prediction-set size. 
Since CIFAR-100 is class-balanced, all reported classes have the same empirical frequency. Therefore, these failures are not explained by class rarity, but rather by residual representation ambiguity or by the smoothing and global-thresholding components of CFCP.

\begin{table}[ht!]
\centering
\setlength{\tabcolsep}{4.5pt}
\caption{
CIFAR-100 failure-mode case study comparing CFCP and NCP at $\alpha=0.1$.
Rows show the worst-covered CFCP classes for APS and RAPS, using mean classwise coverage over five splits.
$\Delta_{0.9}=[0.9-\widehat{\mathrm{Cov}}_c]_+$ denotes the classwise under-coverage deficit.
Class names follow the standard CIFAR-100 class ordering.
}
\label{tab:cifar100_failure_modes}
\begin{tabular}{llccccccc}
\toprule
Score & Class & Freq. &
\multicolumn{2}{c}{Coverage} &
\multicolumn{2}{c}{$\Delta_{0.9}$} &
\multicolumn{2}{c}{Set size} \\
\cmidrule(lr){4-5}
\cmidrule(lr){6-7}
\cmidrule(lr){8-9}
& & & CFCP & NCP & CFCP & NCP & CFCP & NCP \\
\midrule
\multirow{3}{*}{APS}
& class94 (turtle)   & 1.0\% & 0.744 & 0.816 & 0.156 & 0.084 & 12.90 & 19.62 \\
& class65 (possum)   & 1.0\% & 0.784 & 0.864 & 0.116 & 0.036 & 11.50 & 20.45 \\
& class32 (elephant) & 1.0\% & 0.808 & 0.904 & 0.092 & 0.000 &  8.56 & 13.03 \\
\midrule
\multirow{3}{*}{RAPS}
& class94 (turtle)  & 1.0\% & 0.736 & 0.808 & 0.164 & 0.092 & 9.46 & 11.56 \\
& class66 (rabbit)  & 1.0\% & 0.768 & 0.872 & 0.132 & 0.028 & 9.50 & 11.97 \\
& class65 (possum)  & 1.0\% & 0.768 & 0.776 & 0.132 & 0.124 & 9.38 & 12.49 \\
\bottomrule
\end{tabular}
\end{table}

Table~\ref{tab:cifar100_failure_modes} shows that CFCP can improve the broad coverage-efficiency tradeoff while still leaving a small number of classes with larger residual under-coverage. 
The clearest failure case is class94, which is the worst-covered CFCP class under both APS and RAPS. For APS, CFCP attains substantially smaller sets than NCP for the same classes, but this efficiency is accompanied by a larger worst-class deficit. 
For RAPS, the same class remains the largest residual failure, although the difference in set size is smaller. 
Both CFCP and NCP identify the classifier's low confidence on samples from the under-covered classes, resulting in larger prediction sets. Still, for these classes CFCP exhibits worse coverage due to its residual limitation.
These results motivate tail-sensitive extensions of CFCP, such as class or cluster-aware threshold adjustments, while preserving the main empirical advantage of the method: improved class coverage and competitive or smaller prediction sets on average.

\begin{figure}[ht!]
\centering
\begin{subfigure}{\linewidth}
    \centering
    \includegraphics[width=\linewidth]{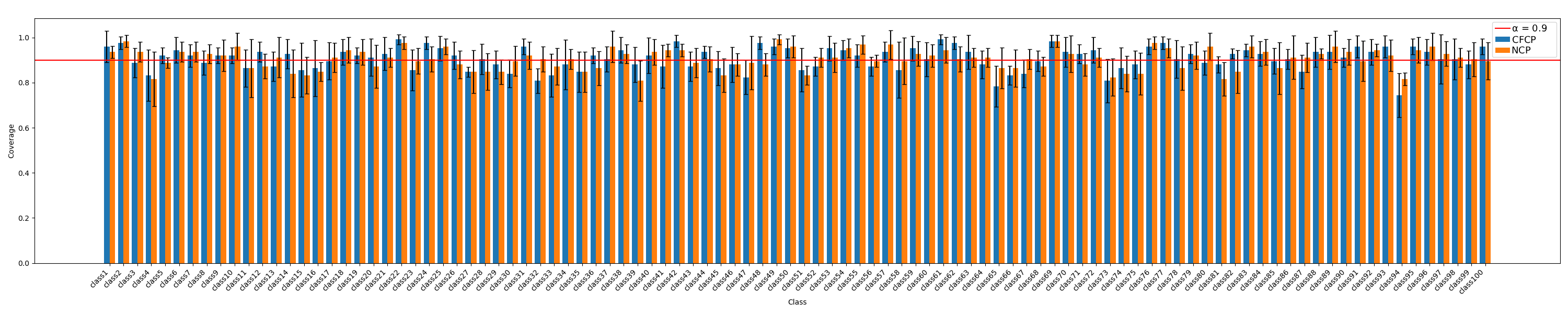}
    \caption{Class-conditional coverage.}
    \label{fig:cifar100_aps_coverage}
\end{subfigure}

\vspace{0.6em}

\begin{subfigure}{\linewidth}
    \centering
    \includegraphics[width=\linewidth]{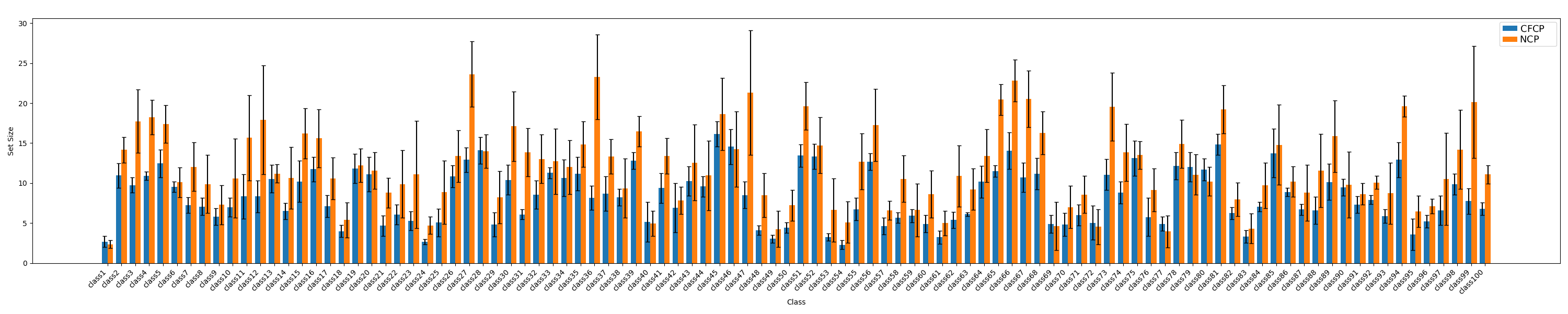}
    \caption{Class-conditional average prediction-set size.}
    \label{fig:cifar100_aps_size}
\end{subfigure}

\caption{
CIFAR-100 classwise comparison of CFCP-APS and NCP-APS at $\alpha=0.1$.
The red horizontal line marks the nominal target coverage $1-\alpha=0.9$.
CFCP is generally more efficient, but the failure-mode table shows that a few classes retain larger under-coverage deficits.
}
\label{fig:cifar100_aps_failure_modes}
\end{figure}

\begin{figure}[t]
\centering
\begin{subfigure}{\linewidth}
    \centering
    \includegraphics[width=\linewidth]{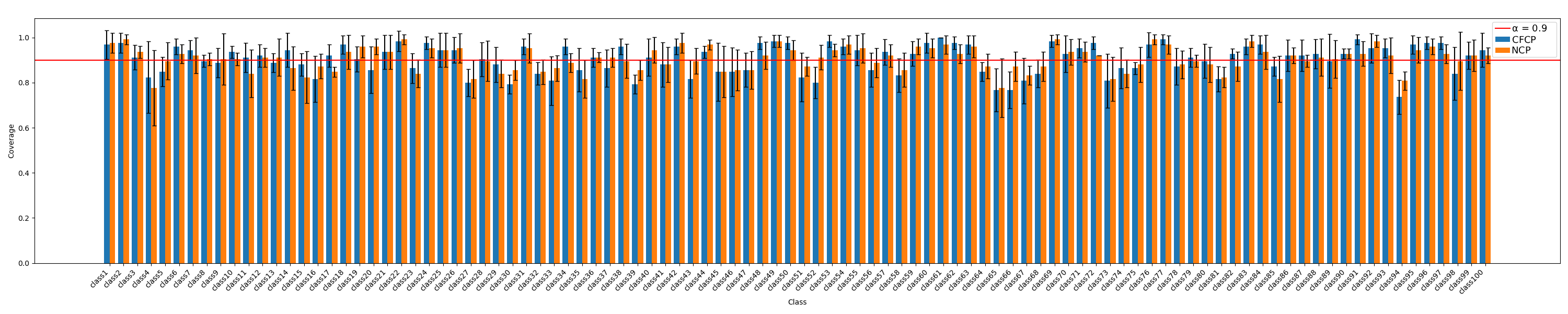}
    \caption{Class-conditional coverage.}
    \label{fig:cifar100_raps_coverage}
\end{subfigure}

\vspace{0.6em}

\begin{subfigure}{\linewidth}
    \centering
    \includegraphics[width=\linewidth]{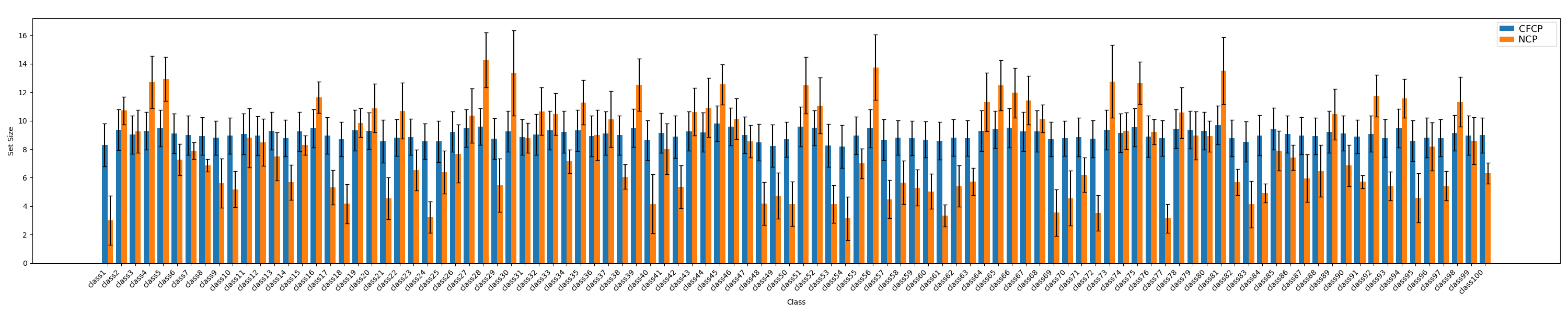}
    \caption{Class-conditional average prediction-set size.}
    \label{fig:cifar100_raps_size}
\end{subfigure}

\caption{
CIFAR-100 classwise comparison of CFCP-RAPS and NCP-RAPS at $\alpha=0.1$.
CFCP covers more classes in the mean classwise view, but the worst-covered CFCP class still has a larger residual under-coverage deficit than the corresponding NCP value.
}
\label{fig:cifar100_raps_failure_modes}
\end{figure}

\clearpage

\section*{NeurIPS Paper Checklist}

\begin{enumerate}

\item {\bf Claims}
    \item[] Question: Do the main claims made in the abstract and introduction accurately reflect the paper's contributions and scope?
    \item[] Answer: \answerYes{}
    \item[] Justification: The abstract and introduction state the paper’s three main contributions: introducing CFCP as a plug-in conformal framework, establishing finite-sample marginal validity in the disjoint-split regime together with an oracle local-validity interpretation for randomized APS, and showing strong empirical class-coverage performance on image and text benchmarks. The paper also states important limitations, including that worst-class max coverage error can remain high on rare classes.
    \item[] Guidelines:
    \begin{itemize}
        \item The answer \answerNA{} means that the abstract and introduction do not include the claims made in the paper.
        \item The abstract and/or introduction should clearly state the claims made, including the contributions made in the paper and important assumptions and limitations. A \answerNo{} or \answerNA{} answer to this question will not be perceived well by the reviewers. 
        \item The claims made should match theoretical and experimental results, and reflect how much the results can be expected to generalize to other settings. 
        \item It is fine to include aspirational goals as motivation as long as it is clear that these goals are not attained by the paper. 
    \end{itemize}

\item {\bf Limitations}
    \item[] Question: Does the paper discuss the limitations of the work performed by the authors?
    \item[] Answer: \answerYes{}
    \item[] Justification: Section~\ref{sec:dicussion} discusses limitations. It notes that CFCP smooths across nearby clusters with a single global threshold, which can dilute very rare or atypical classes, leading to worse max coverage error on a small number of hard classes. Section~\ref{sec:dicussion} also outlines future directions including tail-sensitive extensions. A Failure-mode case study is available in Appendix \ref{app:failure}.
    \item[] Guidelines:
    \begin{itemize}
        \item The answer \answerNA{} means that the paper has no limitation while the answer \answerNo{} means that the paper has limitations, but those are not discussed in the paper. 
        \item The authors are encouraged to create a separate ``Limitations'' section in their paper.
        \item The paper should point out any strong assumptions and how robust the results are to violations of these assumptions (e.g., independence assumptions, noiseless settings, model well-specification, asymptotic approximations only holding locally). The authors should reflect on how these assumptions might be violated in practice and what the implications would be.
        \item The authors should reflect on the scope of the claims made, e.g., if the approach was only tested on a few datasets or with a few runs. In general, empirical results often depend on implicit assumptions, which should be articulated.
        \item The authors should reflect on the factors that influence the performance of the approach. For example, a facial recognition algorithm may perform poorly when image resolution is low or images are taken in low lighting. Or a speech-to-text system might not be used reliably to provide closed captions for online lectures because it fails to handle technical jargon.
        \item The authors should discuss the computational efficiency of the proposed algorithms and how they scale with dataset size.
        \item If applicable, the authors should discuss possible limitations of their approach to address problems of privacy and fairness.
        \item While the authors might fear that complete honesty about limitations might be used by reviewers as grounds for rejection, a worse outcome might be that reviewers discover limitations that aren't acknowledged in the paper. The authors should use their best judgment and recognize that individual actions in favor of transparency play an important role in developing norms that preserve the integrity of the community. Reviewers will be specifically instructed to not penalize honesty concerning limitations.
    \end{itemize}

\item {\bf Theory assumptions and proofs}
    \item[] Question: For each theoretical result, does the paper provide the full set of assumptions and a complete (and correct) proof?
    \item[] Answer: \answerYes{}
    \item[] Justification: Section~\ref{sec:theoretical-analysis} states the main theoretical results (Theorem~\ref{thm:cfcp-marginal}, Corollary~\ref{cor:cfcp-marginal}, Corollary~\ref{cor:cfcp-oracle-local-main}) with their assumptions clearly listed. Full proofs are provided in Appendix \ref{app:validity}, including the marginal validity proof (Appendix~\ref{app:marginal-validity}, Theorem~\ref{thm:cfcp-marginal} and Corollary~\ref{cor:cfcp-marginal}), the oracle local validity proof for randomized APS (Appendix~\ref{app:cfcp-oracle-local-main}, Lemma~\ref{lem:aps-uniform} and Theorem~\ref{thm:oracle-local}), and the approximate local validity proposition (Appendix~\ref{app:approx-local}, Proposition~\ref{prop:approx-local}). Theorem~\ref{thm:oracle-local} and Corollary~\ref{cor:cfcp-oracle-local-main} provide the oracle information advantage result. All theorems and lemmas are numbered and cross-referenced.
    \item[] Guidelines:
    \begin{itemize}
        \item The answer \answerNA{} means that the paper does not include theoretical results. 
        \item All the theorems, formulas, and proofs in the paper should be numbered and cross-referenced.
        \item All assumptions should be clearly stated or referenced in the statement of any theorems.
        \item The proofs can either appear in the main paper or the supplemental material, but if they appear in the supplemental material, the authors are encouraged to provide a short proof sketch to provide intuition. 
        \item Inversely, any informal proof provided in the core of the paper should be complemented by formal proofs provided in appendix or supplemental material.
        \item Theorems and Lemmas that the proof relies upon should be properly referenced. 
    \end{itemize}

    \item {\bf Experimental result reproducibility}
    \item[] Question: Does the paper fully disclose all the information needed to reproduce the main experimental results of the paper to the extent that it affects the main claims and/or conclusions of the paper (regardless of whether the code and data are provided or not)?
    \item[] Answer: \answerYes{}
    \item[] Justification: Section~\ref{sec:experiments} specifies all datasets, architectures, baseline methods and coverage level. Appendix~\ref{app:experimental_setup} provides full experimental details on method parameters, data splits, model training (including optimizer settings, learning rates, and epochs), and baseline hyperparameter grids. The paper also provides the code for reproducing all experiments. All results are reported as means over five repeated dataset splits with fixed seeds.
    \item[] Guidelines:
    \begin{itemize}
        \item The answer \answerNA{} means that the paper does not include experiments.
        \item If the paper includes experiments, a \answerNo{} answer to this question will not be perceived well by the reviewers: Making the paper reproducible is important, regardless of whether the code and data are provided or not.
        \item If the contribution is a dataset and\slash or model, the authors should describe the steps taken to make their results reproducible or verifiable. 
        \item Depending on the contribution, reproducibility can be accomplished in various ways. For example, if the contribution is a novel architecture, describing the architecture fully might suffice, or if the contribution is a specific model and empirical evaluation, it may be necessary to either make it possible for others to replicate the model with the same dataset, or provide access to the model. In general. releasing code and data is often one good way to accomplish this, but reproducibility can also be provided via detailed instructions for how to replicate the results, access to a hosted model (e.g., in the case of a large language model), releasing of a model checkpoint, or other means that are appropriate to the research performed.
        \item While NeurIPS does not require releasing code, the conference does require all submissions to provide some reasonable avenue for reproducibility, which may depend on the nature of the contribution. For example
        \begin{enumerate}
            \item If the contribution is primarily a new algorithm, the paper should make it clear how to reproduce that algorithm.
            \item If the contribution is primarily a new model architecture, the paper should describe the architecture clearly and fully.
            \item If the contribution is a new model (e.g., a large language model), then there should either be a way to access this model for reproducing the results or a way to reproduce the model (e.g., with an open-source dataset or instructions for how to construct the dataset).
            \item We recognize that reproducibility may be tricky in some cases, in which case authors are welcome to describe the particular way they provide for reproducibility. In the case of closed-source models, it may be that access to the model is limited in some way (e.g., to registered users), but it should be possible for other researchers to have some path to reproducing or verifying the results.
        \end{enumerate}
    \end{itemize}

\item {\bf Open access to data and code}
    \item[] Question: Does the paper provide open access to the data and code, with sufficient instructions to faithfully reproduce the main experimental results, as described in supplemental material?
    \item[] Answer: \answerYes{}
    \item[] Justification: Code for reproducing all experiments and summary results is available at an anonymous repository. The repository includes a README with instructions and a license. All datasets used (CIFAR-100, ImageNet, ImageNet-V2, WOS-46985) are publicly available benchmarks.
    \item[] Guidelines:
    \begin{itemize}
        \item The answer \answerNA{} means that paper does not include experiments requiring code.
        \item Please see the NeurIPS code and data submission guidelines (\url{https://neurips.cc/public/guides/CodeSubmissionPolicy}) for more details.
        \item While we encourage the release of code and data, we understand that this might not be possible, so \answerNo{} is an acceptable answer. Papers cannot be rejected simply for not including code, unless this is central to the contribution (e.g., for a new open-source benchmark).
        \item The instructions should contain the exact command and environment needed to run to reproduce the results. See the NeurIPS code and data submission guidelines (\url{https://neurips.cc/public/guides/CodeSubmissionPolicy}) for more details.
        \item The authors should provide instructions on data access and preparation, including how to access the raw data, preprocessed data, intermediate data, and generated data, etc.
        \item The authors should provide scripts to reproduce all experimental results for the new proposed method and baselines. If only a subset of experiments are reproducible, they should state which ones are omitted from the script and why.
        \item At submission time, to preserve anonymity, the authors should release anonymized versions (if applicable).
        \item Providing as much information as possible in supplemental material (appended to the paper) is recommended, but including URLs to data and code is permitted.
    \end{itemize}

\item {\bf Experimental setting/details}
    \item[] Question: Does the paper specify all the training and test details (e.g., data splits, hyperparameters, how they were chosen, type of optimizer) necessary to understand the results?
    \item[] Answer: \answerYes{}
    \item[] Justification: Section~\ref{sec:experiments} describes datasets, model architectures, learned representations, baseline methods, coverage level and statistical significance methodology. Appendix~\ref{app:experimental_setup} provides the complete experimental setup for each dataset, including method parameters, data splits, model architectures, pre-training strategies, fine-tuning hyperparameters (learning rate, weight decay, warmup ratio, epochs), and CFCP-specific hyperparameter grids.
    \item[] Guidelines:
    \begin{itemize}
        \item The answer \answerNA{} means that the paper does not include experiments.
        \item The experimental setting should be presented in the core of the paper to a level of detail that is necessary to appreciate the results and make sense of them.
        \item The full details can be provided either with the code, in appendix, or as supplemental material.
    \end{itemize}

\item {\bf Experiment statistical significance}
    \item[] Question: Does the paper report error bars suitably and correctly defined or other appropriate information about the statistical significance of the experiments?
    \item[] Answer: \answerYes{}
    \item[] Justification: All reported results are means over five repeated dataset splits with fixed seeds and 95\% confidence intervals are computed (Section~\ref{sec:experiments}). The 95\% CIs are omitted from the main tables for readability and are available in Appendix \ref{app:results} with the full experimental results.
    \item[] Guidelines:
    \begin{itemize}
        \item The answer \answerNA{} means that the paper does not include experiments.
        \item The authors should answer \answerYes{} if the results are accompanied by error bars, confidence intervals, or statistical significance tests, at least for the experiments that support the main claims of the paper.
        \item The factors of variability that the error bars are capturing should be clearly stated (for example, train/test split, initialization, random drawing of some parameter, or overall run with given experimental conditions).
        \item The method for calculating the error bars should be explained (closed form formula, call to a library function, bootstrap, etc.)
        \item The assumptions made should be given (e.g., Normally distributed errors).
        \item It should be clear whether the error bar is the standard deviation or the standard error of the mean.
        \item It is OK to report 1-sigma error bars, but one should state it. The authors should preferably report a 2-sigma error bar than state that they have a 96\% CI, if the hypothesis of Normality of errors is not verified.
        \item For asymmetric distributions, the authors should be careful not to show in tables or figures symmetric error bars that would yield results that are out of range (e.g., negative error rates).
        \item If error bars are reported in tables or plots, the authors should explain in the text how they were calculated and reference the corresponding figures or tables in the text.
    \end{itemize}

\item {\bf Experiments compute resources}
    \item[] Question: For each experiment, does the paper provide sufficient information on the computer resources (type of compute workers, memory, time of execution) needed to reproduce the experiments?
    \item[] Answer: \answerYes{}
    \item[] Justification: Appendix~\ref{app:experimental_setup} states that all experiments were performed on a MacBook Pro M4 Max with 36GB RAM. Section~\ref{subsec:complexity} and Appendix~\ref{app:complexity} discuss the computational complexity of CFCP, including the time complexity of clustering and nearest-centroid retrieval. A full run of all reported CFCP experiments required approximately 3-4 hours. Individual dataset runs required approximately 30-60 minutes depending on the dataset.
    \item[] Guidelines:
    \begin{itemize}
        \item The answer \answerNA{} means that the paper does not include experiments.
        \item The paper should indicate the type of compute workers CPU or GPU, internal cluster, or cloud provider, including relevant memory and storage.
        \item The paper should provide the amount of compute required for each of the individual experimental runs as well as estimate the total compute. 
        \item The paper should disclose whether the full research project required more compute than the experiments reported in the paper (e.g., preliminary or failed experiments that didn't make it into the paper). 
    \end{itemize}
    
\item {\bf Code of ethics}
    \item[] Question: Does the research conducted in the paper conform, in every respect, with the NeurIPS Code of Ethics \url{https://neurips.cc/public/EthicsGuidelines}?
    \item[] Answer: \answerYes{}
    \item[] Justification: The research conforms with the NeurIPS Code of Ethics. The paper uses only publicly available benchmark datasets, does not involve human subjects, and the proposed method is a general-purpose improvement to conformal prediction that does not raise specific ethical concerns.
    \item[] Guidelines:
    \begin{itemize}
        \item The answer \answerNA{} means that the authors have not reviewed the NeurIPS Code of Ethics.
        \item If the authors answer \answerNo, they should explain the special circumstances that require a deviation from the Code of Ethics.
        \item The authors should make sure to preserve anonymity (e.g., if there is a special consideration due to laws or regulations in their jurisdiction).
    \end{itemize}

\item {\bf Broader impacts}
    \item[] Question: Does the paper discuss both potential positive societal impacts and negative societal impacts of the work performed?
    \item[] Answer: \answerYes{}
    \item[] Justification: CFCP may improve reliability in high-stakes classification settings by reducing classwise coverage disparities. However, it does not guarantee exact coverage for every subgroup, and the failure-mode analysis shows that some hard classes may remain under-covered. Therefore, CFCP should not be used as a standalone safety guarantee in medical, legal, or other high-stakes deployments without domain-specific validation and monitoring.
    \item[] Guidelines:
    \begin{itemize}
        \item The answer \answerNA{} means that there is no societal impact of the work performed.
        \item If the authors answer \answerNA{} or \answerNo, they should explain why their work has no societal impact or why the paper does not address societal impact.
        \item Examples of negative societal impacts include potential malicious or unintended uses (e.g., disinformation, generating fake profiles, surveillance), fairness considerations (e.g., deployment of technologies that could make decisions that unfairly impact specific groups), privacy considerations, and security considerations.
        \item The conference expects that many papers will be foundational research and not tied to particular applications, let alone deployments. However, if there is a direct path to any negative applications, the authors should point it out. For example, it is legitimate to point out that an improvement in the quality of generative models could be used to generate Deepfakes for disinformation. On the other hand, it is not needed to point out that a generic algorithm for optimizing neural networks could enable people to train models that generate Deepfakes faster.
        \item The authors should consider possible harms that could arise when the technology is being used as intended and functioning correctly, harms that could arise when the technology is being used as intended but gives incorrect results, and harms following from (intentional or unintentional) misuse of the technology.
        \item If there are negative societal impacts, the authors could also discuss possible mitigation strategies (e.g., gated release of models, providing defenses in addition to attacks, mechanisms for monitoring misuse, mechanisms to monitor how a system learns from feedback over time, improving the efficiency and accessibility of ML).
    \end{itemize}
    
\item {\bf Safeguards}
    \item[] Question: Does the paper describe safeguards that have been put in place for responsible release of data or models that have a high risk for misuse (e.g., pre-trained language models, image generators, or scraped datasets)?
    \item[] Answer: \answerNA{}
    \item[] Justification: The paper does not release pre-trained models, generated data, or scraped datasets. CFCP is a plug-in conformal prediction method that operates on top of existing classifiers and uses only publicly available benchmark datasets. The released code poses no risk for misuse.
    \item[] Guidelines:
    \begin{itemize}
        \item The answer \answerNA{} means that the paper poses no such risks.
        \item Released models that have a high risk for misuse or dual-use should be released with necessary safeguards to allow for controlled use of the model, for example by requiring that users adhere to usage guidelines or restrictions to access the model or implementing safety filters. 
        \item Datasets that have been scraped from the Internet could pose safety risks. The authors should describe how they avoided releasing unsafe images.
        \item We recognize that providing effective safeguards is challenging, and many papers do not require this, but we encourage authors to take this into account and make a best faith effort.
    \end{itemize}

\item {\bf Licenses for existing assets}
    \item[] Question: Are the creators or original owners of assets (e.g., code, data, models), used in the paper, properly credited and are the license and terms of use explicitly mentioned and properly respected?
    \item[] Answer: \answerYes{}
    \item[] Justification: The original papers for all datasets (CIFAR-100, ImageNet, ImageNet-V2, WOS-46985), pre-trained models (ResNet-50, DistilBERT, SimCLR-v2), and software libraries (PyTorch, TorchCP, FAISS) are properly cited, including license indication.
    \item[] Guidelines:
    \begin{itemize}
        \item The answer \answerNA{} means that the paper does not use existing assets.
        \item The authors should cite the original paper that produced the code package or dataset.
        \item The authors should state which version of the asset is used and, if possible, include a URL.
        \item The name of the license (e.g., CC-BY 4.0) should be included for each asset.
        \item For scraped data from a particular source (e.g., website), the copyright and terms of service of that source should be provided.
        \item If assets are released, the license, copyright information, and terms of use in the package should be provided. For popular datasets, \url{paperswithcode.com/datasets} has curated licenses for some datasets. Their licensing guide can help determine the license of a dataset.
        \item For existing datasets that are re-packaged, both the original license and the license of the derived asset (if it has changed) should be provided.
        \item If this information is not available online, the authors are encouraged to reach out to the asset's creators.
    \end{itemize}

\item {\bf New assets}
    \item[] Question: Are new assets introduced in the paper well documented and is the documentation provided alongside the assets?
    \item[] Answer: \answerYes{}
    \item[] Justification: The released code repository includes a README with usage instructions and a license file. The paper provides the full algorithmic specification in Section~\ref{sec:method} and Appendix~\ref{app:method}, which together with the code enable reproduction of all results.
    \item[] Guidelines:
    \begin{itemize}
        \item The answer \answerNA{} means that the paper does not release new assets.
        \item Researchers should communicate the details of the dataset\slash code\slash model as part of their submissions via structured templates. This includes details about training, license, limitations, etc. 
        \item The paper should discuss whether and how consent was obtained from people whose asset is used.
        \item At submission time, remember to anonymize your assets (if applicable). You can either create an anonymized URL or include an anonymized zip file.
    \end{itemize}

\item {\bf Crowdsourcing and research with human subjects}
    \item[] Question: For crowdsourcing experiments and research with human subjects, does the paper include the full text of instructions given to participants and screenshots, if applicable, as well as details about compensation (if any)?
    \item[] Answer: \answerNA{}
    \item[] Justification: The paper does not involve crowdsourcing or research with human subjects.
    \item[] Guidelines:
    \begin{itemize}
        \item The answer \answerNA{} means that the paper does not involve crowdsourcing nor research with human subjects.
        \item Including this information in the supplemental material is fine, but if the main contribution of the paper involves human subjects, then as much detail as possible should be included in the main paper. 
        \item According to the NeurIPS Code of Ethics, workers involved in data collection, curation, or other labor should be paid at least the minimum wage in the country of the data collector. 
    \end{itemize}

\item {\bf Institutional review board (IRB) approvals or equivalent for research with human subjects}
    \item[] Question: Does the paper describe potential risks incurred by study participants, whether such risks were disclosed to the subjects, and whether Institutional Review Board (IRB) approvals (or an equivalent approval/review based on the requirements of your country or institution) were obtained?
    \item[] Answer: \answerNA{}
    \item[] Justification: The paper does not involve crowdsourcing or research with human subjects.
    \item[] Guidelines:
    \begin{itemize}
        \item The answer \answerNA{} means that the paper does not involve crowdsourcing nor research with human subjects.
        \item Depending on the country in which research is conducted, IRB approval (or equivalent) may be required for any human subjects research. If you obtained IRB approval, you should clearly state this in the paper. 
        \item We recognize that the procedures for this may vary significantly between institutions and locations, and we expect authors to adhere to the NeurIPS Code of Ethics and the guidelines for their institution. 
        \item For initial submissions, do not include any information that would break anonymity (if applicable), such as the institution conducting the review.
    \end{itemize}

\item {\bf Declaration of LLM usage}
    \item[] Question: Does the paper describe the usage of LLMs if it is an important, original, or non-standard component of the core methods in this research? Note that if the LLM is used only for writing, editing, or formatting purposes and does \emph{not} impact the core methodology, scientific rigor, or originality of the research, declaration is not required.
    %this research?
    \item[] Answer: \answerNA{}
    \item[] Justification: The core method development in this research does not involve LLMs as any important, original, or non-standard component. CFCP operates on learned representations from standard classifiers (ResNet-50, DistilBERT, SimCLR-v2) and does not use LLMs in its methodology.
    \item[] Guidelines:
    \begin{itemize}
        \item The answer \answerNA{} means that the core method development in this research does not involve LLMs as any important, original, or non-standard components.
        \item Please refer to our LLM policy in the NeurIPS handbook for what should or should not be described.
    \end{itemize}

\end{enumerate}

\end{document}